\documentclass[11pt]{article}
\usepackage[T1]{fontenc}         
\usepackage{amsfonts}      
\usepackage{nicefrac}      
\usepackage{microtype}   
\usepackage[colorlinks,
            linkcolor=red,
            anchorcolor=blue,
            citecolor=blue, 
            pagebackref=true
           ]{hyperref}
\usepackage{fullpage}
\usepackage{tabularx}
\usepackage{float}
\usepackage{enumitem}
\usepackage{wrapfig,lipsum}
\usepackage{algorithm}
\usepackage{verbatim}
\usepackage{hyperref}

\usepackage{mmll}
\usepackage{mathtools}
\usepackage{caption}
\usepackage{nicematrix}

\usepackage[colorinlistoftodos, textsize=tiny, textwidth=2.2cm, backgroundcolor=blue!10, linecolor=magenta, bordercolor=magenta]{todonotes}

\renewcommand*{\backref}[1]{}
\renewcommand*{\backrefalt}[4]{%
\ifcase #1 %
    No citations.%
\or
    (p. #2.)%
\else
    (pp. #2.)%
\fi}%
\allowdisplaybreaks
\title{Inference-Time Alignment of Diffusion Models via Trust-Region Iterative Twisted Sequential Monte Carlo}

\author{%
    Weixin Wang\thanks{ 
    Duke University; email: {\tt weixin.wang@duke.edu}}~\footnotemark[3]
    ~~
    Yu Yang\thanks{ 
    Duke University; email: {\tt yu.yang@duke.edu}}~\thanks{Equal contribution} 
    ~~
    Wei Deng\thanks{
    Morgan Stanley; email: {\tt
    weideng056@gmail.com}}~\footnotemark[6]
    ~~
    Pan Xu\thanks{Duke University; email: {\tt pan.xu@duke.edu}}~\thanks{Corresponding authors} 
}

\begin{document}
\maketitle

\begin{abstract}
We study inference-time alignment for diffusion-based generative models, aiming to steer a base model toward high-reward outputs without updating its weights. Recent Sequential Monte Carlo (SMC)-based steering methods approximate reward-tilted target distributions in a principled way, but their proposals remain largely tied to the base sampler. Since reward information is mainly used after propagation through particle reweighting and resampling, these methods can require large particle budgets and suffer from weight degeneracy and high-variance estimates. One way to reduce variance and improve particle efficiency is to iteratively learn twisting functions that provide look-ahead guidance, as in twisted SMC. However, existing learnable twisting methods are developed mainly for classical sequential inference and can be unstable when applied to diffusion-based alignment with high-dimensional state spaces and terminal, noisy, or black-box rewards. We propose Trust-Region Iterative Twisted Sequential Monte Carlo (\textbf{TRI-TSMC}), a trust-region framework for learning twisting functions in SMC-based inference-time alignment. Each iteration computes an exact KL-constrained update in path space, which admits a closed-form solution by tempered importance reweighting, and projects this target back to the parameterized twisted family by weighted maximum likelihood. Theoretically, we formalize the value-function interpretation of the optimal twisting function and show that it yields a zero-variance sampler. We prove that the trust-region update follows an escort path toward the target distribution, that the weighted maximum-likelihood update is a forward-KL projection, and that the path reduces residual importance-weight variance. Empirically, TRI-TSMC improves primary alignment objectives on discrete diffusion text generation and text-to-image generation under matched inference-time budgets.
\end{abstract}

\section{Introduction}

Aligning generative models with human preferences or task-specific reward functions is important for practical deployment. In this paper, we focus on diffusion-based generative models, including continuous diffusion models and discrete diffusion language models. Although these models can generate high-quality images or text, they can still produce outputs that deviate from user intentions or task-specific requirements. Existing alignment methods can be broadly grouped into three categories: (1) post-training fine-tuning methods \citep{black2024training, wallace2024diffusion,clark2023directly,fan2023dpok}; (2) inference-time methods \citep{dhariwal2021diffusion, chung2022diffusion}; (3) distillation-based hybrid approaches \citep{uehara2025inference}. Post-training methods, such as DDPPO and Diffusion-DPO \citep{black2024training, wallace2024diffusion}, update the parameters of the pretrained base model to shift the learned sampling distribution toward high-reward outputs. These methods can produce efficient aligned samplers after training, but they require access to model weights, reward-specific training data, and substantial training computation. They may also reduce sample diversity through alignment tax or mode collapse \citep{stiennon2020learning, bai2022training, fan2023dpok}. More importantly, when the reward model or target preference changes, post-training methods usually require another round of optimization. Distillation methods reduce inference cost by training a student model to imitate an aligned sampler, but they still require an additional training stage and depend on the quality of the teacher sampler \citep{uehara2025inference}. 
Inference-time alignment is more flexible because it keeps the pretrained base model fixed and only changes the sampling procedure. Gradient-based guidance methods, such as Classifier Guidance \citep{dhariwal2021diffusion} and Diffusion Posterior Sampling \citep{chung2022diffusion}, use local gradients from an external signal to adjust each denoising step. However, many alignment rewards are terminal, non-differentiable, or provided by black-box evaluators, such as preference models or image reward models. In these settings, there is no reliable local gradient at every intermediate denoising state. 
SMC-based steering methods address this issue by formulating alignment as sampling from a reward-tilted target distribution \citep{singhal2025general, wu2023practical, uehara2025inference}. They can handle terminal or black-box rewards through particles, intermediate potentials, importance weighting, and resampling. However, their proposal dynamics usually remain largely tied to the sampler induced by the pretrained base model. Reward information is mainly used after particles have already been propagated, through reweighting and resampling. When high-reward trajectories are rare under the base sampler, many particles lie in low-reward regions before resampling. This leads to weight degeneracy, high-variance estimates, and a strong dependence on large particle budgets. 

Twisted SMC provides a natural way to reduce variance and improve particle efficiency by modifying the proposal itself  \citep{guarniero2017iterated}. A twisting function gives look-ahead guidance: states that are expected to lead to large future potential receive larger proposal probability. Theoretically, with the optimal twisting function, the residual importance weights become constant, giving a zero-variance importance sampler \citep{heng2020controlled, guarniero2017iterated}. Therefore, the key question is how to iteratively  learn a good twisting function for high-dimensional generative alignment. 
Existing learnable twisting methods, such as iAPF \citep{guarniero2017iterated} and TPPF \citep{lu2024guidancetwistedparticlefilter}, were mainly developed for classical sequential statistical inference. Extending them to diffusion-based generative alignment is not straightforward. Sequential statistical inference problems often involve lower-dimensional states and structured observation models, while diffusion alignment involves high-dimensional image, latent, or token states, neural reverse transitions, and rewards that are often terminal, black-box, or estimated from partially denoised samples. In this regime, recursive regression targets can be noisy, and direct path-space optimization can have high variance especially when the current proposal is far from the target.

Our method is motivated by the value-function interpretation of the optimal twisting function. After a logarithmic transformation, the optimal twisting recursion becomes a soft Bellman recursion under the Feynman--Kac formulation. This connection suggests that twisting-function learning should be stabilized by conservative local updates, in the same spirit as trust-region methods for value and policy learning in RL \citep{schulman2015trust} and stochastic control problem \citep{blessing2025trust}. 
We therefore propose \textbf{Trust-Region Iterative Twisted Sequential Monte Carlo (TRI-TSMC)}, a trust-region framework for iteratively learning twisting functions in SMC-based inference-time alignment. Each iteration first computes an exact KL-constrained update in path space, which has a closed-form solution by tempered importance reweighting. It then projects this intermediate target back to the parameterized twisted family by weighted maximum likelihood. \textbf{Our contributions} are summarized as follows:
\begin{itemize}[leftmargin=10pt,nosep]
    \item We propose TRI-TSMC, a trust-region framework for iteratively learning twisting functions in SMC-based inference-time alignment of diffusion-based generative models. The method alternates between an exact KL-constrained path-space update and a weighted maximum-likelihood projection onto the parameterized twisted family.

    \item We provide a theoretical analysis of TRI-TSMC. We formalize the value-function interpretation of the optimal twisting function, show its zero-variance property, and prove that the trust-region update follows an escort path while reducing residual importance-weight variance.

    \item We evaluate TRI-TSMC on discrete diffusion text generation and text-to-image diffusion alignment. TRI-TSMC improves the primary alignment objectives over inference-time baselines under matched inference-time budgets, with additional analyses showing the quality--diversity trade-off.
\end{itemize}

\section{Related Work}

\paragraph{Diffusion Model Alignment.}

Diffusion alignment studies steering a pretrained diffusion model toward a reward-tilted target distribution of the form $p^{\text{target}}(x) \propto p^{\text{pre}}(x)\exp(r(x)/\alpha)$, where $r(x)$ is a task-specific reward, possibly provided by a black-box evaluator or a learned reward model, and $\alpha>0$ is a temperature parameter. Existing methods can be broadly grouped into three directions: post-training, inference-time alignment, and hybrid distillation. 

Post-training methods directly update model parameters so that the learned reverse process shifts toward high-reward samples while staying close to the pretrained model \citep{lee2023aligning}. \citet{black2024training} formulates diffusion denoising as a sequential decision process and applies policy-gradient fine-tuning. Other methods directly backpropagate through the denoising chain when the reward is differentiable \citep{clark2023directly}, or introduce preference-based objectives and KL regularization to improve stability and reduce drift from the base model \citep{fan2023dpok, wallace2024diffusion}. These methods can produce efficient aligned samplers after training, but they require updating model weights and are usually tied to a specific reward.

Inference-time alignment keeps the pretrained model fixed and modifies the sampling procedure instead. This line of work is attractive when rewards are non-differentiable, expensive to evaluate, or frequently changed across tasks. A classical direction is gradient-based guidance, where each reverse step is adjusted using gradients from an external signal. Classifier guidance \citep{dhariwal2021diffusion} is the standard example, and later work extends this idea to more general posterior sampling and inverse problems \citep{chung2022diffusion}. Another direction uses particle-based methods such as Sequential Monte Carlo to approximate reward-tilted target distributions. These methods are especially useful when guidance is only available through a terminal reward or a black-box evaluator, so there is no simple gradient signal at each step. Recent work has developed several SMC-based approaches for inference-time alignment in continuous diffusion models and discrete diffusion language models. FK-Steering formulates reward-guided diffusion generation as a Feynman--Kac particle system and shows that increasing the inference-time particle budget can improve alignment quality without updating the base model \citep{singhal2025general}. Twisted Diffusion Sampler (TDS) similarly uses SMC for diffusion models, with a focus on practical and asymptotically exact conditional sampling \citep{wu2023practical}. Beyond standard particle steering, optimized auxiliary particle filters study how to adapt proposal mixtures through convex optimization \citep{branchini2021optimized}, while test-time SMC alignment for discrete diffusion constructs training-free local proposals using Taylor and Gumbel--Softmax approximations \citep{pani2025test}. For masked diffusion language models, ILRR performs reference-based representation steering \citep{avrahami2026ilrr}, whereas Self-Rewarding SMC uses trajectory-level model confidence as an intrinsic signal for particle reweighting and resampling \citep{luo2026self}. 

A third direction combines inference-time alignment and post-training through distillation. The main idea is to first run a stronger inference-time method as a teacher, and then train a student model to imitate the guided trajectories or conditional transitions. This can convert a slow but accurate inference-time sampler into a fast aligned generator. Progressive distillation was originally proposed for acceleration \citep{salimans2022progressive}, and recent work discusses policy distillation more broadly as a way to transfer guidance, SMC, or value-based steering back into model weights \citep{uehara2025inference}. Related to these directions, several recent papers study diffusion alignment from an optimal-control perspective and connect aligned sampling to Doob's $h$-transform or stochastic control formulations \citep{domingo2024adjoint, denker2024deft, so2026discrete}. This view is closely related to the value-function interpretation of twisting functions used in our method.

\section{Preliminary}

\label{sec:preliminary}

\subsection{Inference-time Alignment as a Unified Sequential Sampling Problem}

We study inference-time alignment for diffusion-based generative models, including continuous-state diffusion models and discrete diffusion language models, where generation is fundamentally viewed as a sequential stochastic process over a trajectory $X_{0:T}=(X_0,X_1,\dots,X_T)$, where $X_t\in\mathcal{X}_t$ and $\mathcal{X}_t$ denotes the respective state space. For continuous diffusion models, $\mathcal{X}_t$ is a continuous image or latent space; conversely, for discrete diffusion language models, $\mathcal{X}_t$ is a discrete token or mask-state space. The trajectory $X_{0:T}$ represents the full path generated by the reverse denoising process. To maintain notational consistency throughout this paper, we index the process by the sampler iteration rather than the physical noise level; thus, $X_0$ denotes the initial state of the sampler and $X_T$ denotes the final state at which the output is realized. The pretrained base model induces a base path measure $P(dx_{0:T}) = \mu(dx_0) \prod_{t=1}^T f_t(dx_t \mid x_{t-1})$, where $\mu$ is the initial distribution and $f_t$ is the transition kernel from $\mathcal{X}_{t-1}$ to $\mathcal{X}_t$. Our method leverages the sequential nature of generation.
 
Inference-time alignment keeps the pretrained base model fixed and modifies only the sampling procedure at inference time. The fixed base model induces the base path measure $P$, which serves as the reference distribution for alignment. Given a reward function $r:\mathcal{X}_T\to\mathbb{R}$ that measures a target attribute or human preference of the final output, the goal is to sample from a reward-tilted path distribution. Following \citet{singhal2025general, uehara2025inference}, we define the target alignment distribution as
\begin{align}
\label{equ:alignment_target}
    \pi(dx_{0:T}) \propto P(dx_{0:T}) \exp\bigg(\frac{r(x_T)}{\alpha}\bigg),
\end{align}
where $\alpha>0$ is the temperature parameter, $x_T$ denotes the final generated object, such as final clean sample in continuous diffusion or final token sequence in discrete diffusion language models.

A key difficulty is that the reward in \eqref{equ:alignment_target} is often terminal: it is evaluated only on the final output $X_T$, while intermediate states usually do not have a natural stepwise guidance signal. As a result, direct sampling from $\pi$ is difficult in long-horizon or high-dimensional generation, since most partial trajectories provide little information about whether they will eventually lead to a high-reward output.

\subsection{Sequential Monte Carlo for Terminal-Reward Alignment}

To handle terminal rewards in a sequential way, we use the Feynman--Kac formulation. For notational simplicity, we use statewise potential functions $\{g_t(x_t)\}_{t=0}^T$ and define the target path distribution as
\begin{align}
\label{equ:fk_target}
    \pi(dx_{0:T}) = Z^{-1}\gamma(dx_{0:T}) = Z^{-1} P(dx_{0:T}) \prod_{t=0}^T g_t(x_t).
\end{align}
where $\gamma(dx_{0:T})$ is the unnormalized target path measure, and $Z = \mathbb{E}_{X \sim P}[\prod_{t=0}^T g_t(X_t)]$ is the normalizing constant. The terminal-reward alignment target in \eqref{equ:alignment_target} is a special case of \eqref{equ:fk_target}, if 
\begin{align*}
    g_t(x_t) = 1, \quad t=0,\dots,T-1, \qquad
    g_T(x_T) = \exp({r(x_T)}/{\alpha}).
\end{align*}
For a proposal measure $Q$ such that $\gamma \ll Q$, we use the unnormalized importance weight
\begin{align}
\label{equ:importance_weight_def}
    w(X_{0:T}) = \frac{d\gamma}{dQ}(X_{0:T}), \qquad
    \frac{d\pi}{dQ}(X_{0:T}) = \frac{1}{Z} w(X_{0:T}).
\end{align}
The importance weight satisfies $\mathbb{E}_{X\sim Q}[w(X_{0:T})]=Z$. 
Standard SMC approximates $\pi$ with a system of $K$ particles that are propagated, weighted, and resampled over time (refer to \Cref{app:further_details_methodology} for further details). In the standard setting, the proposal is the base model itself, i.e., $Q=P$, and
\begin{align}
\label{equ:smc_global_weight}
     w(X_{0:T}) = \frac{d\gamma}{dP}(X_{0:T}) = \prod_{t=0}^T g_t(X_t).
\end{align}
Recent SMC-based steering methods for diffusion models define intermediate potentials based on reward estimates and apply particle resampling during generation \citep{singhal2025general, wu2023practical}. These methods are appealing because they are training-free and capable of handling terminal or black-box rewards.

However, traditional SMC-based steering propagates particles predominantly according to the base model's dynamics. When the reward-tilted target distribution is substantially distant from the base model's distribution, many particles are initially sampled from low-reward regions and only corrected after propagation via importance weighting and resampling. In long-horizon or high-dimensional generation tasks, this approach leads to poor particle efficiency, weight degeneracy, and high-variance estimates. This fundamental inefficiency motivates modifying the proposal itself using look-ahead information, the core principle behind Twisted SMC.

\section{Methodology: TRI-TSMC}

\label{sec:methodology}

\subsection{Twisted Sequential Monte Carlo for Diffusion Alignment}

The key limitation of standard SMC is that particle propagation proceeds according to the base transition dynamics before the resulting importance weights are calculated and applied for correction. Twisted Sequential Monte Carlo (TSMC) addresses this by modifying the proposal distribution itself. Given the Feynman-Kac target measure $\pi(dx_{0:T}) = Z^{-1}\gamma(dx_{0:T}) \propto P(dx_{0:T})\prod_{t=0}^T g_t(x_t)$, TSMC introduces a sequence of positive twisting functions $\psi=\{\psi_t(\cdot)\}_{t=0}^T$. Intuitively, $\psi_t(x_t)$ provides look-ahead information regarding the likelihood of the current state $x_t$ leading to large future potentials.

Given the base process $P(dx_{0:T})=\mu(dx_0)\prod_{t=1}^T f_t(dx_t\mid x_{t-1})$, the twisted initial distribution and twisted transition kernels are
\begin{align}
\label{equ:twisted_kernel}
    \mu^\psi(dx_0) = \frac{\psi_0(x_0)}{c_\psi}\mu(dx_0), \qquad
    f_t^\psi(dx_t \mid x_{t-1}) = \frac{\psi_t(x_t)}{\tilde{\psi}_{t-1}(x_{t-1})} f_t(dx_t \mid x_{t-1}),
\end{align}
where $c_\psi = \int \psi_0(x_0)\mu(dx_0)$, $\tilde{\psi}_{t-1}(x_{t-1}) = \int \psi_t(x_t)f_t(dx_t\mid x_{t-1})$, $t=1,\dots,T$, 
and we set $\tilde{\psi}_T(\cdot)\equiv 1$. Let $P^\psi$ denote the induced twisted path measure. The objective of twist learning is to choose $\psi$ so that $P^\psi$ becomes a better proposal for the Feynman-Kac target: particles should be guided toward trajectories with large future potentials before being corrected by importance weights. For any given $\psi$, the remaining mismatch between $P^\psi$ and the unnormalized target measure $\gamma$ is defined by the residual importance weight
\begin{align}
\label{equ:twisted_weight}
    w^\psi(X_{0:T}) = \frac{d\gamma}{dP^\psi}(X_{0:T}) = c_\psi \prod_{t=0}^T g_t(X_t)\frac{\tilde{\psi}_t(X_t)}{\psi_t(X_t)}.
\end{align}
Thus, a well-chosen twisting function reduces the variance of this residual importance weight, thereby improving particle efficiency and bringing the particle approximation closer to the target distribution. We provide a detailed decomposition of this incremental weight in Appendix~\ref{app:further_details_methodology}.

Theoretically, \citet{guarniero2017iterated} showed that an optimal twisting function, $\psi^*$, exists that renders the residual importance weight constant, achieving a zero-variance importance sampler. This optimal function satisfies the backward recursion:
\begin{align}
\label{equ:optimal_psi}
    \psi_t^*(x_t) = g_t(x_t)\int \psi_{t+1}^*(x_{t+1}) f_{t+1}(dx_{t+1}\mid x_t), \qquad t=0,\dots,T-1,
\end{align}
with terminal condition $\psi_T^*(x_T)=g_T(x_T)$. \eqref{equ:optimal_psi} immediately implies $\psi_t^*(x_t)=g_t(x_t)\tilde{\psi}_t^*(x_t)$ by definition. In particular, in the terminal-reward case, we have $\psi_t^*(x_t) = \mathbb{E}_{P}[\exp({r(X_T)}/{\alpha})| X_t=x_t]$.
The optimal twist provides a look-ahead value of the current denoising state, and the twisted transition favors next states with larger expected future potential. If $\psi=\psi^*$, the residual importance weight becomes constant and the twisted proposal exactly matches the target path distribution. Since computing $\psi^*$ exactly is intractable in high-dimensional generation, the central algorithmic question is how to learn a parameterized twisting function $\psi(\theta)$ stably from finite samples.

\subsection{Algorithm Design}
Existing SMC-based approaches, such as iAPF (which approximates the optimal twisting functions via recursive regression on simulated particles \citep{guarniero2017iterated}) and TPPF (which directly optimizes neural twisting functions through a path-space KL objective \citep{lu2024guidancetwistedparticlefilter}), have demonstrated effectiveness in simpler sequential statistical inference settings. However, they exhibit instability when applied to the high-dimensional, non-linear dynamics characteristic of generative model alignment. Specifically, direct path-space optimization often suffers from high variance when the current proposal is far from the target distribution, complicating twisting-function learning during early training stages.

To address this, we propose \textbf{Trust-Region Iterative Twisted Sequential Monte Carlo (TRI-TSMC)} in Algorithm~\ref{alg:TR_TSMC} with a flow chart in \Cref{fig:tsmc_flow_chart}. The key idea is to decouple twist learning into two  stages. First, we compute an intermediate path distribution in probability space, constrained to remain within a KL trust region around the current twisted path measure. This step possesses a closed-form solution derived through tempered reweighting of the current proposal. Second, we project this intermediate distribution back onto the parameterized twisted function family by a weighted maximum likelihood. This converts the complex, ideal path-space update into a stable, trainable learning objective for the twisting function.

\begin{figure}[t]
    \centering
    \includegraphics[width=1.0\linewidth]{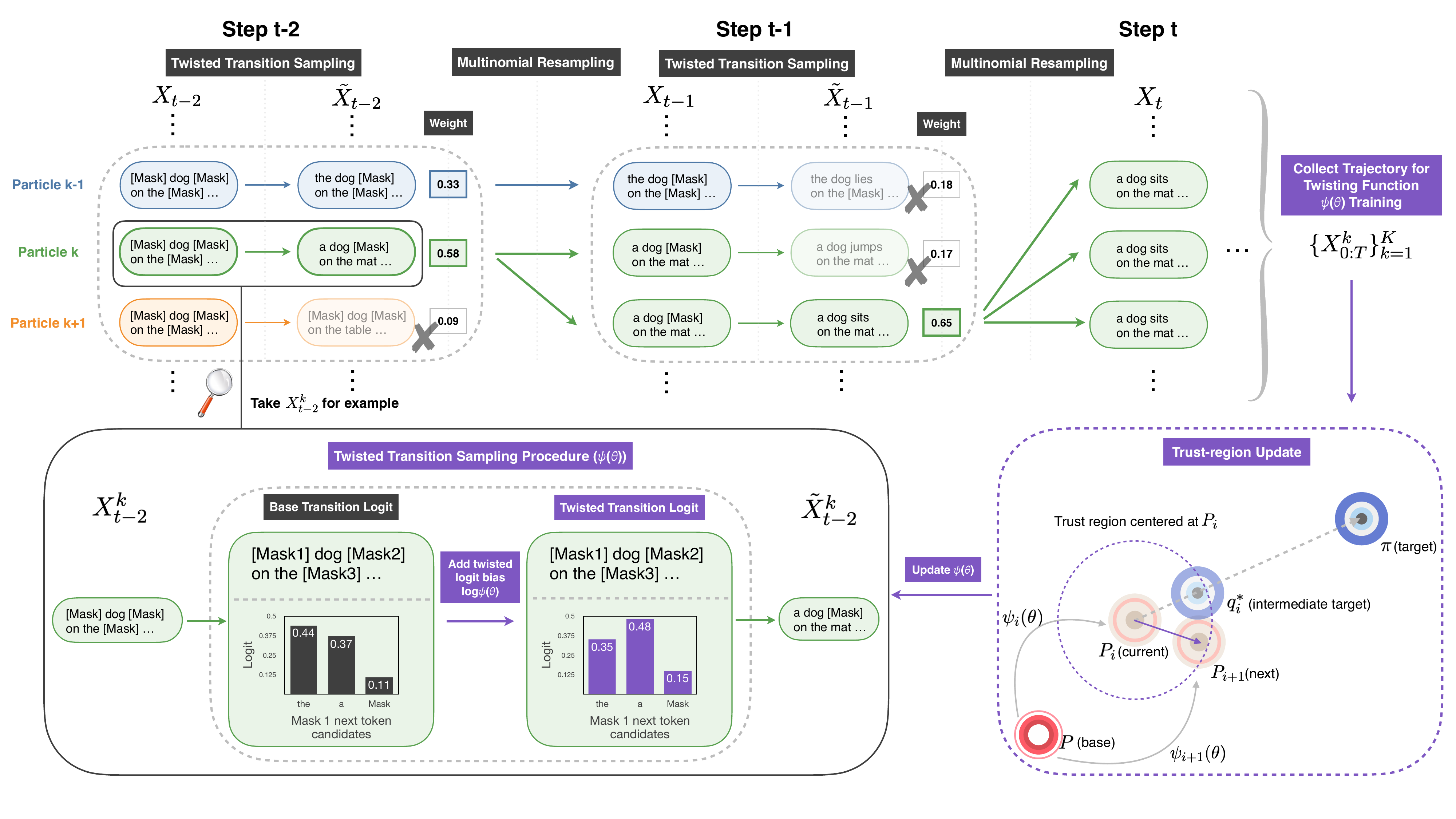}
    \caption{Overview of TRI-TSMC. The sampler alternates between twisted transition sampling and multinomial resampling to propagate particles toward high-reward trajectories. The twisting function modifies the base transition by adding a learned logit bias, providing look-ahead guidance during generation. Collected trajectories are then used in a trust-region update, which computes tempered trajectory weights and trains the next twisting function by weighted maximum likelihood.}
    \label{fig:tsmc_flow_chart}
\end{figure}

\paragraph{Trust Region Update in Path Space.} 

Let $\psi(\theta_i) = \{\psi_t(\cdot;\theta_i)\}_{t=0}^T$ denote the parameterized twisting functions at outer iteration $i$, and let $P_i = P^{\psi(\theta_i)}$ be the corresponding twisted path measure induced by \eqref{equ:twisted_kernel}. We define the normalized density ratio as follows,
\begin{align}
\label{equ:normalized_density_ratio}
    R_i(X_{0:T}) := \frac{d\pi}{dP_i}(X_{0:T}) = \frac{1}{Z} w^{\psi(\theta_i)}(X_{0:T}),
\end{align}
where $w^{\psi(\theta_i)}$ is the unnormalized importance weight in \eqref{equ:twisted_weight}. Our goal is to find an intermediate distribution $q$ that is close to the target distribution $\pi$, while remaining within a trust region around the current path measure $P_i$. We formulate this as
\begin{align}
\label{equ:intermediate_distribution_optimization}
    q_i^* = \argmin_{q} D_{\text{KL}}(q\|\pi)
    \quad \text{s.t.} \quad D_{\text{KL}}\big(q \| P_i\big) \leq \epsilon,
\end{align}
where $\epsilon > 0$ is the defined trust-region radius.

The next lemma shows that this trust-region update has a closed-form solution and achieves a monotone improvement in distribution space.

\begin{lemma}[Trust-region path-space update]
\label{lem:trust_region}
Let $P_i = P^{\psi(\theta_i)}$ be the current twisted path measure, and let $q_i^*$ be the solution to \eqref{equ:intermediate_distribution_optimization}. Then there exists $\lambda_i\geq 0$ such that
\begin{align}
\label{equ:trust_region_closed_form}
    \frac{dq_i^*}{dP_i}(X_{0:T})
    \propto (R_i(X_{0:T}))^{\frac{1}{1+\lambda_i}}
    \propto \big(w^{\psi(\theta_i)}(X_{0:T})\big)^{\frac{1}{1+\lambda_i}}.
\end{align}
Moreover, we have $D_{\text{KL}}(q_i^*\|\pi)\leq D_{\text{KL}}(P_i\|\pi)$.
\end{lemma}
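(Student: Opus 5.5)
We treat \eqref{equ:intermediate_distribution_optimization} as a convex program over probability measures $q\ll P_i$ and solve it with a Lagrangian. Write $q$ through its density $h = dq/dP_i$. Since $\pi\ll P_i$ with $d\pi/dP_i = R_i$, we have
\begin{align*}
D_{\text{KL}}(q\|\pi) = \mathbb{E}_{P_i}\big[h\log h\big] - \mathbb{E}_{P_i}\big[h\log R_i\big], \qquad D_{\text{KL}}(q\|P_i)=\mathbb{E}_{P_i}\big[h\log h\big].
\end{align*}
Both objective and constraint are therefore convex in $h$, and the feasible set is convex. Slater's condition holds because $q=P_i$, i.e.\ $h\equiv 1$, gives $D_{\text{KL}}(P_i\|P_i)=0<\epsilon$. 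Hence strong duality holds, and there is a multiplier $\lambda_i\ge 0$ for which $q_i^*$ minimizes
\begin{align*}
\mathcal{L}(h)=(1+\lambda_i)\,\mathbb{E}_{P_i}[h\log h]-\mathbb{E}_{P_i}[h\log R_i]-\lambda_i\epsilon+\nu\big(\mathbb{E}_{P_i}[h]-1\big),
\end{align*}
with $\nu$ enforcing normalization. The pointwise stationarity condition is $(1+\lambda_i)(\log h+1)-\log R_i+\nu=0$. This gives $h\propto R_i^{1/(1+\lambda_i)}$. Since $R_i=w^{\psi(\theta_i)}/Z$ and $Z$ is a constant, the proportionality with $(w^{\psi(\theta_i)})^{1/(1+\lambda_i)}$ follows. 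Complementary slackness identifies the two regimes. If $D_{\text{KL}}(\pi\|P_i)\le\epsilon$, then $\lambda_i=0$ and $q_i^*=\pi$. Otherwise the constraint is active and $\lambda_i>0$ solves $D_{\text{KL}}(q_\lambda\|P_i)=\epsilon$.

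The main obstacle is making this rigorous in infinite dimensions rather than arguing from the formal first-order condition. I would avoid functional derivatives and verify optimality directly with a Gibbs-variational argument. Set $\beta=1/(1+\lambda_i)$, let $q_\beta$ have density $R_i^\beta/\mathbb{E}_{P_i}[R_i^\beta]$, and write $A(\beta)=\log\mathbb{E}_{P_i}[R_i^\beta]$. For any feasible $q$, the Lagrangian can be rewritten as
\begin{align*}
(1+\lambda_i)\,D_{\text{KL}}(q\|q_\beta)-(1+\lambda_i)A(\beta)+\text{const}.
\end{align*}
This is minimized exactly at $q=q_\beta$. Existence of the multiplier comes from continuity and monotonicity of $\beta\mapsto D_{\text{KL}}(q_\beta\|P_i)$ on $[0,1]$. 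That map is continuous, equals $0$ at $\beta=0$ and $D_{\text{KL}}(\pi\|P_i)$ at $\beta=1$, and is increasing because its derivative is $\beta A''(\beta)\ge 0$. The intermediate value theorem then gives $\beta$, and weak duality finishes the optimality claim. I would assume $\mathbb{E}_{P_i}[R_i^\beta]<\infty$; for $\beta\le 1$ this is automatic by Jensen's inequality.

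The monotone improvement is then immediate from optimality. $P_i$ is itself feasible, since $D_{\text{KL}}(P_i\|P_i)=0\le\epsilon$. The minimizer $q_i^*$ therefore satisfies $D_{\text{KL}}(q_i^*\|\pi)\le D_{\text{KL}}(P_i\|\pi)$.

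As an alternative check, $D_{\text{KL}}(q_\beta\|\pi)=(\beta-1)A'(\beta)-A(\beta)$ (using $D_{\text{KL}}(q_\beta\|P_i)=\beta A'(\beta)-A(\beta)$ and $\mathbb{E}_{q_\beta}[\log R_i]=A'(\beta)$). Its derivative in $\beta$ is $(\beta-1)A''(\beta)\le 0$, so $D_{\text{KL}}(q_\beta\|\pi)$ is nonincreasing along the escort path. Comparing $\beta$ with $\beta=0$, where $q_0=P_i$, gives the same inequality.
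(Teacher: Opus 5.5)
Your proposal is correct and follows the same route as the paper. You form the Lagrangian $D_{\text{KL}}(q\|\pi)+\lambda(D_{\text{KL}}(q\|P_i)-\epsilon)$, read off the escort form $dq_i^*/dP_i\propto R_i^{1/(1+\lambda_i)}$, and get the monotone improvement from feasibility of $P_i$. Beyond that, your rewriting of the Lagrangian as $(1+\lambda_i)D_{\text{KL}}(q\|q_\beta)$ plus a constant, the intermediate-value choice of $\beta$, and the weak-duality check make rigorous what the paper only asserts: it takes the Gibbs minimizer as given and effectively leaves existence of the multiplier to Proposition~\ref{prop:escort_geometry}.
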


\paragraph{Efficient Implementation via Weighted Maximum Likelihood.} 

In practice, we solve the dual problem using sampled trajectories $\{\xi^k\}_{k=1}^K\sim P_i$. Since $R_i=Z^{-1}w^{\psi(\theta_i)}$, the empirical dual problem simplifies to
\begin{align}
\label{equ:empirical_dual}
    \textstyle \hat{\lambda}_i = \argmin_{\lambda \geq 0} \Big[\lambda \epsilon + (1+\lambda) \log\Big(\frac{1}{K} \sum_{k=1}^K \big(w^{\psi(\theta_i)}(\xi^k)\big)^{\frac{1}{1+\lambda}}\Big)\Big].
\end{align}
Therefore, based on \eqref{equ:trust_region_closed_form}, the tempered regression weights are computed as $\hat{w}_k = \frac{(w^{\psi(\theta_i)}(\xi^k))^{\hat{\tau}_i}}{\sum_{j=1}^K (w^{\psi(\theta_i)}(\xi^j))^{\hat{\tau}_i}}$,
where $\hat{\tau}_i = 1/(1+\hat{\lambda}_i)$. We then fit the twisting function by weighted maximum likelihood:
\begin{align}
\label{equ:parameter_update_loss}
   \mathcal{L}(\theta) = -\sum_{k=1}^K \hat{w}_k \log P^{\psi(\theta)}(\xi^k).
\end{align}
This objective \eqref{equ:parameter_update_loss} is the empirical projection of the intermediate trust-region target back to the parameterized twisted family. A more detailed derivation is given in \Cref{app:further_details_methodology}.

\begin{algorithm}[t]
\caption{Trust-Region Iterative Twisted Sequential Monte Carlo (TRI-TSMC)}
\label{alg:TR_TSMC}
\begin{algorithmic}[1]
    \REQUIRE Base model $\{\mu, f_t\}_{t=1}^T$, potentials $\{g_t\}_{t=0}^T$, trust-region size $\epsilon$, number of particles $K$, outer iterations $I$.
    
    \STATE Initialize twisting parameters $\theta_0$.
    
    \FOR{$i = 0, 1, \dots, I-1$}

        \STATE Let $P_i = P^{\psi(\theta_i)}$ be the current twisted path measure. \label{line:define_path_measure}
        
        \STATE Run the twisted SMC particle system under $\psi(\theta_i)$ to obtain ancestral trajectories $\{X_{0:T}^{(k)}\}_{k=1}^K$. \label{line:tsmc_sampling}

        \STATE Compute residual trajectory weights $w^{(k)} = w^{\psi(\theta_i)}(X_{0:T}^{(k)})$ using \eqref{equ:twisted_weight}. \label{line:compute_weight}
        
        \STATE Solve the empirical dual problem \eqref{equ:empirical_dual} to obtain $\hat{\lambda}_i$ and set $\hat{\tau}_i = 1/(1+\hat{\lambda}_i)$. \label{line:dual_solution}
        
        \STATE Calculate normalized tempered weights $\hat{w}_k = \frac{(w^{(k)})^{\hat{\tau}_i}}{\sum_{j=1}^K (w^{(j)})^{\hat{\tau}_i}}$,  for $k=1,\dots,K$. \label{line:compute_tempered_weight}

        \STATE Update $\theta_{i+1}$ by optimizing from $\theta_i$ on $\mathcal{L}_i(\theta) = - \sum_{k=1}^K \hat{w}_k \log P^{\psi(\theta)}(X_{0:T}^{(k)})$. \label{line:optimization}

    \ENDFOR
    
    \STATE \textbf{return} $\theta_I$.
\end{algorithmic}
\end{algorithm}

\paragraph{Algorithm Interpretation.} 

At a high level, \Cref{alg:TR_TSMC} alternates between a twisted SMC sampling stage and a trust-region learning stage. At outer iteration $i$, Line~\ref{line:define_path_measure} defines the current twisted path measure $P_i=P^{\psi(\theta_i)}$ induced by the current twisting function. The first stage (Line~\ref{line:tsmc_sampling}-\ref{line:compute_weight}) generates particles (trajectories) using the current twisted SMC sampler. Specifically, particles are propagated by the twisted kernels in \eqref{equ:twisted_kernel} and resampled using the residual incremental weights. At each step $t$,
\begin{align*}
    W_t^k = \frac{w_t^{\psi(\theta_i)}(X_t^k)}{\sum_{\ell=1}^K w_t^{\psi(\theta_i)}(X_t^\ell)}, \qquad A_t^k\sim \mathrm{Cat}(W_t^{1:K}), \qquad X_{t+1}^k\sim f_{t+1}^{\psi(\theta_i)}(\cdot\mid X_t^{A_t^k}).
\end{align*}
The resulting resampled trajectories $\{X_{0:T}^{(k)}\}_{k=1}^K$ form a finite-particle approximation of the current twisted proposal. Line~\ref{line:compute_weight} then computes the residual trajectory weights $w^{(k)}$, which measure the remaining mismatch between the current twisted proposal and the unnormalized target measure. The second stage (Lines~\ref{line:dual_solution}-\ref{line:optimization}) performs the trust-region update and projection. Line~\ref{line:dual_solution} solves the empirical dual problem to obtain $\hat{\lambda}_i$ and the trust-region temperature $\hat{\tau}_i=1/(1+\hat{\lambda}_i)$. Line~\ref{line:compute_tempered_weight} uses this temperature to construct tempered trajectory weights. These weights define the empirical intermediate path distribution associated with the exact trust-region update. Finally, Line~\ref{line:optimization} fits the next twisting function by weighted maximum likelihood, so that $P^{\psi(\theta_{i+1})}$ approximates this intermediate target within the parameterized twisted family.

\section{Theoretical Analysis}
\label{sec:theoretical_analysis}

This section analyzes the theoretical guarantees of TRI-TSMC. We first formalize the value-function interpretation of the optimal twisting function and show that the optimal twisting function yields a zero-variance sampler. We demonstrate that the exact KL-constrained update follows an escort path from the current twisted proposal toward the target distribution. Finally, we prove that the weighted maximum-likelihood step is a forward-KL projection and that the trust-region path reduces residual importance-weight variance. Throughout this section, we assume that $g_t(x_t)>0$ for all $t$ and all $x_t$. We denote $P_i=P^{\psi(\theta_i)}$ and use $R_i$ defined in \eqref{equ:normalized_density_ratio} for path-space analysis. Complete proofs in this section are given in \Cref{sec:appendix_proof}.

\begin{theorem}[Equivalence to soft value function]
\label{thm:soft_value_equivalence}
Let $\psi^*=\{\psi_t^*\}_{t=0}^T$ be the optimal twisting function sequence satisfying \eqref{equ:optimal_psi}. Define that
\begin{align}
\label{equ:value_function_def}
    V_t^*(x_t) := \alpha \log \psi_t^*(x_t), \qquad t=0,\dots,T.
\end{align}
Then, for $t=0,\dots,T-1$, $V_t^*$ satisfies the soft Bellman recursion
\begin{align}
\label{equ:soft_bellman_general}
    V_t^*(x_t) = \alpha \log g_t(x_t) + \alpha \log \int \exp\big({V_{t+1}^*(x_{t+1})}/{\alpha}\big) f_{t+1}(dx_{t+1}\mid x_t),
\end{align}
with terminal condition $V_T^*(x_T) = \alpha \log g_T(x_T)$. Crucially, under the optimal twist, $w^{\psi^*}(X_{0:T}) = Z$, which implies a zero-variance importance sampler.
\end{theorem}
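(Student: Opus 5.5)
The argument splits into two independent parts: a log transform of the recursion \eqref{equ:optimal_psi}, and a telescoping computation of the residual weight \eqref{equ:twisted_weight} at $\psi=\psi^*$. Throughout we use the standing assumption $g_t>0$. It gives $\psi_T^*=g_T>0$, and by backward induction through \eqref{equ:optimal_psi}, $\psi_t^*>0$ for every $t$. We also assume, as is implicit in the paper, that the integrals $\tilde\psi_t^*(x_t)=\int\psi^*_{t+1}\,df_{t+1}(\cdot\mid x_t)$ are finite. Under these conditions $V_t^*=\alpha\log\psi_t^*$ is well defined and real-valued.

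For the soft Bellman recursion, take logarithms of both sides of \eqref{equ:optimal_psi} and multiply by $\alpha$. Then substitute $\psi_{t+1}^*=\exp(V_{t+1}^*/\alpha)$ inside the integral. This is exactly \eqref{equ:soft_bellman_general}. The terminal condition $V_T^*=\alpha\log g_T$ follows in the same way from $\psi_T^*=g_T$. No further work is needed for this part.

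For the zero-variance claim, first record that $\psi_t^*=g_t\tilde\psi_t^*$ holds for $t<T$ by \eqref{equ:optimal_psi}. It also holds at $t=T$ by the convention $\tilde\psi_T\equiv1$ together with $\psi_T^*=g_T$. Consequently every factor $g_t(X_t)\tilde\psi^*_t(X_t)/\psi^*_t(X_t)$ in \eqref{equ:twisted_weight} equals $1$, and we get $w^{\psi^*}(X_{0:T})=c_{\psi^*}=\int\psi_0^*\,d\mu$.

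It remains to show $c_{\psi^*}=Z$. We plan to unroll the recursion. Iterating \eqref{equ:optimal_psi} gives $\psi_0^*(x_0)=\mathbb{E}_P[\prod_{t=0}^T g_t(X_t)\mid X_0=x_0]$, which can be proved by backward induction using the Markov property of $P$. Integrating against $\mu$ then yields $c_{\psi^*}=\mathbb{E}_P[\prod_t g_t(X_t)]=Z$.

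As a cross-check, we can use $\mathbb{E}_{P^{\psi^*}}[w^{\psi^*}]=Z$. This identity holds because $w^{\psi^*}$ is the Radon--Nikodym derivative $d\gamma/dP^{\psi^*}$, and a constant weight must therefore equal $Z$. Since the weight is constant, $P^{\psi^*}=\pi$ and the importance sampler has zero variance.

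The only genuinely delicate point is well-definedness. Positivity and finiteness of $\psi_t^*$ and $\tilde\psi_t^*$ are needed so that the logarithms, the ratios in \eqref{equ:twisted_weight}, and the twisted kernels \eqref{equ:twisted_kernel} make sense. We will state these as the conditions inherited from $g_t>0$ and the integrability implicit in \eqref{equ:optimal_psi}. The induction identifying $\psi_0^*$ with a conditional expectation is routine.
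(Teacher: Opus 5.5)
Your proposal is correct and follows essentially the same route as the paper's proof. Both take the logarithm of the recursion \eqref{equ:optimal_psi} to get the soft Bellman equation, then use $\psi_t^*=g_t\tilde\psi_t^*$ to make every residual factor equal to $1$, and finally unroll the recursion to show $c_{\psi^*}=Z$. Your explicit treatment of the $t=T$ factor via $\tilde\psi_T\equiv 1$ and your positivity/finiteness remarks are small additions that the paper leaves implicit.
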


\begin{remark}
In the terminal-reward case, where $g_t(x_t)\equiv 1$ for $t<T$ and $g_T(x_T)=\exp(r(x_T)/\alpha)$, \Cref{thm:soft_value_equivalence} reduces to $V_t^*(x_t) = \alpha \log \int \exp({V_{t+1}^*(x_{t+1})}/{\alpha}) f_{t+1}(dx_{t+1}\mid x_t)$ for $t=0,\dots,T-1$, 
with $V_T^*(x_T)=r(x_T)$. Thus, in diffusion alignment, the optimal twisting function can be viewed as the soft value function of the current denoising state.
\end{remark}

We next analyze the exact path-space update used to move the current proposal toward the target. \Cref{lem:trust_region} gives the closed-form trust-region update. The following proposition shows that the exact update lies on an escort path between the current twisted proposal $P_i$ and the target distribution $\pi$.

\begin{proposition}
\label{prop:escort_geometry}
Suppose that, for every $\tau \in [0,1]$, $\mathbb{E}_{X\sim P_i}[R_i(X)^\tau |\log R_i(X)|^2] < \infty$. Define
\begin{alignat}{2}
    q_{i,\tau}(dX) &:= \frac{R_i(X)^\tau}{\mathbb{E}_{X\sim P_i}[R_i(X)^\tau]} P_i(dX),\label{equ:escort_family}\\
    \Lambda_i(\tau) &:= \log \mathbb{E}_{X\sim P_i}[R_i(X)^\tau].
\label{equ:log_partition_escort}
\end{alignat}
Then $q_{i,0}=P_i$, $q_{i,1}=\pi$, and the exact trust-region solution in \eqref{equ:intermediate_distribution_optimization} satisfies $q_i^* = q_{i,\tau_i}$ and $\tau_i = 1/{(1+\lambda_i)}$,
where $\lambda_i$ is the dual optimizer in \Cref{lem:trust_region}. Moreover, $\tau \mapsto D_{\text{KL}}(q_{i,\tau}\|P_i)$ is nondecreasing on $[0,1]$, while $\tau \mapsto D_{\text{KL}}(q_{i,\tau}\|\pi)$ is nonincreasing on $[0,1]$. If $P_i \neq \pi$, then these monotonicities are strict on $(0,1]$ and $[0,1)$, respectively. Consequently, for every $\epsilon \in (0, D_{\text{KL}}(\pi\|P_i))$, there exists a unique $\tau_i \in (0,1)$ such that $D_{\text{KL}}(q_{i,\tau_i}\|P_i)=\epsilon$.
\end{proposition}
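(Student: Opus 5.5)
The plan is to treat $\{q_{i,\tau}\}_{\tau\in[0,1]}$ as a one-parameter exponential family with base measure $P_i$ and sufficient statistic $\log R_i$, so that $\Lambda_i$ is its log-partition function. The endpoint claims are immediate. Since $R_i^0=1$, we get $q_{i,0}=P_i$. Since $\mathbb{E}_{P_i}[R_i]=1$ (because $R_i=d\pi/dP_i$ and $g_t>0$ gives mutual absolute continuity), we get $q_{i,1}=R_i\,P_i=\pi$. The identification $q_i^*=q_{i,\tau_i}$ with $\tau_i=1/(1+\lambda_i)$ is then just \Cref{lem:trust_region}: \eqref{equ:trust_region_closed_form} says $dq_i^*/dP_i\propto R_i^{1/(1+\lambda_i)}$, and normalization forces the constant to be $\mathbb{E}_{P_i}[R_i^{\tau_i}]^{-1}$.

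Next I would compute both divergences along the path in terms of $\Lambda_i$. Write $\ell=\log R_i$. Then $\log(dq_{i,\tau}/dP_i)=\tau\ell-\Lambda_i(\tau)$ and $\log(dq_{i,\tau}/d\pi)=(\tau-1)\ell-\Lambda_i(\tau)$. Using $\Lambda_i'(\tau)=\mathbb{E}_{q_{i,\tau}}[\ell]$, this gives
\begin{align*}
F(\tau):=D_{\text{KL}}(q_{i,\tau}\|P_i)=\tau\Lambda_i'(\tau)-\Lambda_i(\tau),\qquad G(\tau):=D_{\text{KL}}(q_{i,\tau}\|\pi)=(\tau-1)\Lambda_i'(\tau)-\Lambda_i(\tau).
\end{align*}
The moment hypothesis $\mathbb{E}_{P_i}[R_i^\tau|\ell|^2]<\infty$ is what justifies differentiating twice under the integral. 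For interior $\tau$, I would dominate $R_i^{s}|\ell|^k$ for $s$ near $\tau$ by $R_i^{a}|\ell|^k+R_i^{b}|\ell|^k$ with $a<\tau<b$ in $[0,1]$, using convexity of $s\mapsto R^s$. At the endpoints I would use one-sided derivatives. This yields $\Lambda_i''(\tau)=\mathrm{Var}_{q_{i,\tau}}(\ell)\ge 0$. Differentiating then gives $F'(\tau)=\tau\Lambda_i''(\tau)\ge 0$ and $G'(\tau)=(\tau-1)\Lambda_i''(\tau)\le 0$ on $[0,1]$, which proves both monotonicities.

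For strictness, suppose $P_i\neq\pi$. Then $\ell$ is not $P_i$-a.s. constant, since a constant ratio would have to equal $1$. Each $q_{i,\tau}$ is mutually absolutely continuous with $P_i$, so $\mathrm{Var}_{q_{i,\tau}}(\ell)>0$ for every $\tau$. Hence $F'>0$ on $(0,1]$ and $G'<0$ on $[0,1)$, giving strict monotonicity on those intervals. For the final claim, $F$ is continuous and strictly increasing on $[0,1]$ with $F(0)=0$ and $F(1)=D_{\text{KL}}(\pi\|P_i)$. The intermediate value theorem then gives, for each $\epsilon\in(0,D_{\text{KL}}(\pi\|P_i))$, a unique $\tau_i\in(0,1)$ with $F(\tau_i)=\epsilon$. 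This is consistent with complementary slackness in \Cref{lem:trust_region}: the constraint is active, so $\lambda_i>0$.

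The main obstacle is the analytic step: justifying differentiability of $\Lambda_i$ up to second order on the closed interval, and finiteness of $F$ and $G$, using only the stated moment condition. The difficulty is concentrated at $\tau=0$, where $R_i^0|\ell|^2=|\ell|^2$ must be integrable under $P_i$, and at $\tau=1$. I would handle these via monotone convergence and one-sided difference quotients. If that fails, I would fall back on a derivative-free argument: prove $F(\tau)=\int_0^\tau s\,\Lambda_i''(s)\,ds$ for interior $\tau$, then extend to the endpoints by continuity.
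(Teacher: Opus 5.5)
Your proposal is correct and follows essentially the same route as the paper: identify $q_i^*$ with $q_{i,1/(1+\lambda_i)}$ via \Cref{lem:trust_region}, write both divergences in terms of $\Lambda_i$ and $\Lambda_i'$, use $\Lambda_i''=\mathrm{Var}_{q_{i,\tau}}(\log R_i)$ to obtain the derivatives $\tau\Lambda_i''$ and $-(1-\tau)\Lambda_i''$, and conclude strictness and uniqueness from non-constancy of $\log R_i$ plus the intermediate value theorem. Your extra care supplies rigor the paper leaves implicit: a dominating function justifies differentiating under the integral (one works uniformly on $[0,1]$, namely $(1+R_i)|\log R_i|^2$, since $R_i^s\le 1+R_i$ there), and the positive-variance step uses mutual absolute continuity rather than only $q_{i,\tau}\ll P_i$.
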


The escort update $q_i^*$ is defined in the space of path measures and is generally not represented exactly by the parameterized twisted family. Therefore, the implementable algorithm must project this intermediate target back to a tractable family $P^{\psi(\theta)}$. The next lemma shows that the weighted maximum-likelihood step used by TRI-TSMC is precisely the forward-KL projection of $q_i^*$.

\begin{lemma}
\label{lem:projection_as_mle}
Define the projection objective $L_i(\theta):= -\mathbb{E}_{X\sim q_i^*}[\log P^{\psi(\theta)}(X)]$. Then minimizing $L_i(\theta)$ over $\theta$ is equivalent to minimizing $D_{\text{KL}}(q_i^* \| P^{\psi(\theta)})$. More precisely,
\begin{align*}
    L_i(\theta) = D_{\text{KL}}\big(q_i^* \| P^{\psi(\theta)}\big) - \mathbb{E}_{X\sim q_i^*}\big[\log q_i^*(X)\big].
\end{align*}
Consequently, if $\theta_{i+1}$ satisfies $L_i(\theta_{i+1}) \leq \inf_{\theta} L_i(\theta) + \eta_i$, then, with $P_{i+1}:=P^{\psi(\theta_{i+1})}$, we have
\begin{align*}
    D_{\text{KL}}\big(q_i^* \| P_{i+1}\big) \leq \inf_{\theta} D_{\text{KL}}\big(q_i^* \| P^{\psi(\theta)}\big) + \eta_i.
\end{align*}
\end{lemma}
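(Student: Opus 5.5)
The argument is a direct expansion of the definition of forward KL, followed by adding and subtracting $\inf_\theta$. Throughout, $\log P^{\psi(\theta)}(X)$ and $\log q_i^*(X)$ denote densities with respect to a common dominating reference measure $\nu$ on path space. In the discrete case $\nu$ is counting measure; in the continuous case it is Lebesgue measure on the trajectory space. By \Cref{lem:trust_region}, $q_i^*\ll P_i$, and $P_i$ has a density because each twisted kernel \eqref{equ:twisted_kernel} is a positive reweighting of $f_t$. So all objects are absolutely continuous with respect to $\nu$, provided the base path measure $P$ is.

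\textbf{Step 1: the identity.} Assuming $q_i^*\ll P^{\psi(\theta)}$, the definition gives
\[
D_{\text{KL}}(q_i^*\|P^{\psi(\theta)})=\mathbb{E}_{q_i^*}\bigl[\log q_i^*(X)-\log P^{\psi(\theta)}(X)\bigr].
\]
Splitting the expectation then yields $D_{\text{KL}}(q_i^*\|P^{\psi(\theta)})=\mathbb{E}_{q_i^*}[\log q_i^*]+L_i(\theta)$, which rearranges to the stated identity.

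The absolute continuity holds automatically. Since $g_t>0$ and $\psi_t>0$, every twisted path measure is mutually absolutely continuous with $P$. Since $q_i^*\ll P_i\sim P$, we get $q_i^*\ll P^{\psi(\theta)}$ for every $\theta$.

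\textbf{Step 2: equivalence of minimizers.} The term $-\mathbb{E}_{q_i^*}[\log q_i^*]$ (entropy, or differential entropy) does not depend on $\theta$. Hence $L_i(\theta)$ and $D_{\text{KL}}(q_i^*\|P^{\psi(\theta)})$ differ by a constant, and they share minimizers and near-minimizers.

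\textbf{Step 3: the $\eta_i$-suboptimality bound.} Write $C=-\mathbb{E}_{q_i^*}[\log q_i^*]$. Subtracting $C$ from both sides of $L_i(\theta_{i+1})\le \inf_\theta L_i(\theta)+\eta_i$ gives the bound, because $\inf_\theta (D_{\text{KL}}+C)=\inf_\theta D_{\text{KL}}+C$.

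\textbf{Main obstacle: integrability.} The only genuinely delicate point is that the split in Step 1 requires $\mathbb{E}_{q_i^*}[\log q_i^*]$ to be finite, so that we never form $\infty-\infty$. I would state this as a standing assumption: $q_i^*$ has finite entropy relative to $\nu$. Under that assumption, $L_i(\theta)$ is finite exactly when the KL is, and the identity holds in $(-\infty,\infty]$. This makes the infimum manipulation in Step 3 legitimate.

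In the discrete diffusion setting with a finite state space, this condition is automatic. In the continuous setting it can be checked using the closed form of \Cref{lem:trust_region}: $dq_i^*/dP_i\propto R_i^{\tau_i}$ with $\tau_i\in(0,1]$, and the moment assumption of \Cref{prop:escort_geometry} gives $\mathbb{E}_{q_i^*}|\log R_i|<\infty$. It then reduces to finiteness of the entropy of $P_i$.
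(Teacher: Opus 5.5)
Your proof is correct and follows the paper's argument: expand $D_{\text{KL}}(q_i^*\|P^{\psi(\theta)})$, split off the $\theta$-independent term $\mathbb{E}_{q_i^*}[\log q_i^*]$, and subtract it from both sides of the $\eta_i$-near-optimality inequality; your absolute-continuity and finite-entropy remarks add rigor the paper leaves implicit. One small correction to your closing aside: once $\mathbb{E}_{q_i^*}|\log R_i|<\infty$, what actually remains is finiteness of the cross term $\mathbb{E}_{q_i^*}[\log p_i]$ (with $p_i$ the $\nu$-density of $P_i$), not of the entropy of $P_i$ --- and the issue disappears altogether if you take densities relative to $P_i$, since $\mathbb{E}_{q_i^*}[\log q_i^*]$ then equals $D_{\text{KL}}(q_i^*\|P_i)\le\epsilon$, which is finite by the trust-region constraint.
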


Thus, the weighted maximum-likelihood objective in \eqref{equ:parameter_update_loss} is the empirical version of a forward-KL projection from the exact trust-region target to the learned twisted path measure. \Cref{sec:appendix_proof} further gives a projection error bound for this step. Finally, we analyze the stability of the tempered reweighting step. Since the projection objective is weighted by importance weights, a natural measure of stability is the variance of the residual weights after moving along the escort path. 

\begin{theorem}[Variance reduction]
\label{thm:variance_reduction}
Assume that $R_i \in L^2(P_i)$. Let $q_{i,\tau}$ and $\Lambda_i(\tau)$ be defined by \eqref{equ:escort_family} and \eqref{equ:log_partition_escort}, and define $\chi^2(\pi\|q) := \int (\frac{d\pi}{dq}(X)-1)^2 q(dX)$. Then, for every $\tau\in[0,1]$,
\begin{align}
\label{equ:variance_gamma_ratio}
    Z^{-2}{\mathrm{Var}_{X\sim q_{i,\tau}}\bigg[\frac{d\gamma}{dq_{i,\tau}}(X)\bigg]}= \chi^2(\pi\|q_{i,\tau}).
\end{align}
Further, $\tau \mapsto \chi^2(\pi\|q_{i,\tau})$ is nonincreasing on $[0,1]$ and strictly decreasing on $[0,1)$ whenever $P_i\neq \pi$.
\end{theorem}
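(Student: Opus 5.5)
Both claims reduce to explicit moments of $R_i$ under $P_i$, so the plan is to compute $\chi^2(\pi\|q_{i,\tau})$ in closed form and then differentiate in $\tau$.

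\emph{Step 1: the identity \eqref{equ:variance_gamma_ratio}.} Since $\gamma = Z\pi$, we have $\frac{d\gamma}{dq_{i,\tau}} = Z\frac{d\pi}{dq_{i,\tau}}$. Moreover $\mathbb{E}_{q_{i,\tau}}[\frac{d\pi}{dq_{i,\tau}}]=1$, provided $\pi\ll q_{i,\tau}$. This absolute continuity holds because $R_i^\tau>0$ wherever $R_i>0$. Hence
$Z^{-2}\mathrm{Var}_{q_{i,\tau}}[d\gamma/dq_{i,\tau}] = \mathbb{E}_{q_{i,\tau}}[(d\pi/dq_{i,\tau})^2]-1 = \chi^2(\pi\|q_{i,\tau})$.

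\emph{Step 2: closed form.} By the chain rule and \eqref{equ:escort_family},
\[
\frac{d\pi}{dq_{i,\tau}} = \frac{R_i}{R_i^\tau e^{-\Lambda_i(\tau)}} = R_i^{1-\tau}e^{\Lambda_i(\tau)}.
\]
Therefore
\[
\chi^2(\pi\|q_{i,\tau})+1=\mathbb{E}_{q_{i,\tau}}[R_i^{2-2\tau}]e^{2\Lambda_i(\tau)} = \mathbb{E}_{P_i}[R_i^{2-\tau}]\,e^{\Lambda_i(\tau)} = \exp\big(\Lambda_i(2-\tau)+\Lambda_i(\tau)\big).
\]
Here $\Lambda_i$ is extended to $[0,2]$; it is finite there by $R_i\in L^2(P_i)$ together with Hölder/Jensen interpolation, since $\mathbb{E}_{P_i}[R_i^s]\le \mathbb{E}_{P_i}[R_i^2]^{s/2}$ for $s\in[0,2]$. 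So it suffices to show that $h(\tau):=\Lambda_i(\tau)+\Lambda_i(2-\tau)$ is nonincreasing on $[0,1]$.

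\emph{Step 3: monotonicity via convexity.} $\Lambda_i$ is convex on $[0,2]$, being a log-moment-generating function of $\log R_i$; this follows from Hölder's inequality, so no differentiability is needed. For $0\le\tau_1<\tau_2\le1$, note that $\tau_2$ and $2-\tau_2$ lie in $[\tau_1,2-\tau_1]$ and have the same sum. Convexity then gives
\[
\Lambda_i(\tau_2)+\Lambda_i(2-\tau_2)\le \Lambda_i(\tau_1)+\Lambda_i(2-\tau_1).
\]
Concretely, write $\tau_2=a\tau_1+(1-a)(2-\tau_1)$ and $2-\tau_2=(1-a)\tau_1+a(2-\tau_1)$, and add the two convexity inequalities. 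This yields that $h$ is nonincreasing on $[0,1]$.

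\emph{Step 4: strictness, the main obstacle.} We need $\Lambda_i$ to be strictly convex when $P_i\ne\pi$. Equality in Hölder's inequality forces $\log R_i$ to be $P_i$-a.s.\ constant. Since $\mathbb{E}_{P_i}R_i=1$, this means $R_i\equiv1$, i.e.\ $P_i=\pi$. Hence if $P_i\neq\pi$, $\Lambda_i$ is strictly convex. The convexity combination in Step 3 is then strict whenever $\tau_1<\tau_2<2-\tau_1$, which holds for $\tau_1<\tau_2\le 1$ except when $\tau_2=1=2-\tau_2$ coincides with an endpoint. That case is still strict because $a\in(0,1)$ is needed only for the interior point, and $\tau_1<1$ guarantees it. So $h$, and thus $\chi^2(\pi\|q_{i,\tau})$, is strictly decreasing on $[0,1)$ and in fact up to $1$.

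The delicate points are the integrability at exponents in $(1,2]$ and the care needed in the equality cases of Hölder's inequality.
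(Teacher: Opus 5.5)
Your proposal is correct and follows essentially the same route as the paper. Both arguments rest on the closed form $1+\chi^2(\pi\|q_{i,\tau})=\exp(\Lambda_i(\tau)+\Lambda_i(2-\tau))$, convexity of $\Lambda_i$, and strict convexity when $P_i\neq\pi$. The one difference is that you use the chord form of convexity rather than differentiating and comparing $\Lambda_i'(\tau)$ with $\Lambda_i'(2-\tau)$, which usefully avoids having to justify differentiability of $\Lambda_i$ at the endpoints $0$ and $2$, where $R_i\in L^2(P_i)$ alone does not guarantee it.
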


\begin{remark}
\Cref{thm:variance_reduction} gives a distribution-level explanation for the stabilizing effect of the trust-region update. Along the exact escort path, it monotonically reduces the normalized variance of the residual importance weights. This suggests that, as the proposal moves toward the target, fewer trajectories dominate the weighted maximum-likelihood projection, leading to a more stable update.
\end{remark}

\section{Experiments}
\label{sec:experiments}

In this section, we evaluate TRI-TSMC on two inference-time alignment tasks including discrete diffusion text generation and text-to-image diffusion generation. The first task studies whether TRI-TSMC can steer a Masked Diffusion Language Model toward fluent and grammatically acceptable text. The second task evaluates TRI-TSMC on text-to-image Stable Diffusion models toward images with higher ImageReward. We present the improved performance of our proposed TRI-TSMC across different tasks compared with Best-of-$N$ and FK-Steering under comparable sampling budgets.

\subsection{Discrete Diffusion Text Generation}
\label{sec:disc-text}

\paragraph{Experimental Setup.} 

We first evaluate TRI-TSMC on inference-time alignment for discrete diffusion text generation. We use the Masked Diffusion Language Model (MDLM) \citep{sahoo2024simple} trained on OpenWebText as the base generator. Our evaluation uses 15 discriminative prompts from the PPLM benchmark \citep{dathathri2019plug}. For each prompt and each random seed in $\{42,43,44\}$, we generate 20 batches, yielding 300 samples per seed and 900 samples in total. All experiments use 200 diffusion steps, and particle resampling is performed every 20 steps.

For text generation experiments, we use a mixed steering reward that combines a PPL reward with a grammar reward CoLA:
$r_{\mathrm{mix}} = r_{\mathrm{PPL}} + \beta r_{\mathrm{CoLA}}.$
Here $r_{\mathrm{PPL}}$ is the log negative perplexity score derived from a language-model perplexity evaluator,
$r_{\mathrm{CoLA}}$ is the acceptability score from a CoLA classifier
\citep{warstadt2019neural}, and $\beta$ is a fixed mixing weight. 
In the main experiment, we use GPT-2 \citep{radford2019language} as reward function and provide Qwen2.5-1.5B \citep{hui2024qwen2} based results in \Cref{app:qwen_evaluation}. We report perplexity (PPL) under the corresponding fluency evaluator, CoLA score, and Dist-1/2/3 \citep{li2016diversity}.

\paragraph{Baselines.} 

We compare TRI-TSMC against three baselines: (i) \textbf{Base}, which samples directly from the pretrained MDLM without steering ($K=1$); (ii) \textbf{Best-of-N (BoN)}, which draws $K \in \{16,48\}$ independent samples and returns the one with the highest final reward; (iii) \textbf{FK-Steering}, the particle-based inference-time steering method of \citet{singhal2025general} using the reward $r_{\mathrm{PPL}}$ to define potential functions for alignment, with $K \in \{16,48\}$.
For BoN and FK-Steering, we report $K\in\{16,48\}$. The $K=16$ setting matches
the number of particles used by TRI-TSMC in each sampling iteration, while $K=48$
matches the total number of sampled trajectories used by TRI-TSMC across all iterations.

\paragraph{Implementation Details.}

We use the same resampling interval for FK-Steering and TRI-TSMC, performing particle resampling every 20 reverse diffusion steps. For GPT-2-based alignment, TRI-TSMC uses the \texttt{max} potential to construct local Feynman--Kac weights from intermediate reward estimation. The twisting function is parameterized by an MLP. Full hyperparameters, reward weights, potential definitions, and additional implementation details are provided in \Cref{app:implement_details}.

\paragraph{Results Analysis.} 

\Cref{fig:gpt2_overall_result} reports GPT-2-based PPL reward alignment results on MDLM. TRI-TSMC achieves the lowest perplexity among all inference-time baselines, reducing PPL from $36.702$ under the strongest FK-Steering baseline ($K=48$) and $44.109$ under BoN ($K=48$) to $29.781$ with $K=16$ particles. TRI-TSMC also attains the highest CoLA score, improving over FK-Steering ($K=48$) from $0.393$ to $0.727$. In terms of diversity, TRI-TSMC has lower Dist-1/2/3 scores than FK-Steering, decreasing from $0.520/0.831/0.888$ under FK-Steering ($K=48$) to $0.488/0.774/0.833$. The Base model still has the highest Dist-$n$ values overall, which is expected because it performs no reward alignment and therefore samples more broadly. Overall, these results show that TRI-TSMC gives the strongest improvement on the primary alignment objective and substantially improves CoLA, while trading off some diversity. Note that TRI-TSMC uses $K=16$ particles for each outer iteration and runs $3$ outer iterations, giving the same total trajectory budget as FK-Steering with $K=48$ while using a smaller particle set per iteration.

We further provide Qwen2.5-1.5B-based evaluation in \Cref{app:qwen_evaluation}. Similarly, TRI-TSMC achieves the best aligned metric PPL and the best CoLA score. In addition, we conduct an FK-Steering mixed-reward ablation study to further study the gain of mixed reward design in \Cref{app:fk_mixed}. Finally, we further provide a qualitative analysis of generated text samples that illustrate the same trade-off between fluency, acceptability, and diversity in \Cref{app:text_samples}.

\begin{figure}[t]
\centering
\begin{minipage}[t]{0.54\textwidth}
\vspace{12mm}
\centering
\setlength{\tabcolsep}{2.5pt}
\fontsize{8.5pt}{11pt}\selectfont
\begin{NiceTabular}{lcccccc}
\CodeBefore
  \rowcolor{gray!10}{7}
  \columncolor[opacity=0.2]{blue}{3}
\Body
\toprule
Method & $K$  & PPL(*) $\downarrow$ & CoLA $\uparrow$ & Dist-1 $\uparrow$ & Dist-2 $\uparrow$ & Dist-3 $\uparrow$ \\
\midrule
Base & 1  & 104.427 & 0.223 & \textbf{0.594} & \textbf{0.919} & \textbf{0.941} \\
BoN & 16  & 51.301  & 0.347 & 0.552 & 0.879 & 0.922 \\
BoN & 48  & 44.109  & 0.371 & 0.532 & 0.852 & 0.904 \\
FK-Steering & 16 & 43.050  & 0.390 & 0.528 & 0.849 & 0.905 \\
FK-Steering & 48 & 36.702  & 0.393 & 0.520 & 0.831 & 0.888 \\
\textbf{TRI-TSMC} & 16 & \textbf{29.781} & \textbf{0.727} & 0.488 & 0.774 & 0.833 \\
\bottomrule
\end{NiceTabular}
\end{minipage}
\hfill
\begin{minipage}[t]{0.42\textwidth}
\vspace{0pt}
\centering
\includegraphics[width=\linewidth]{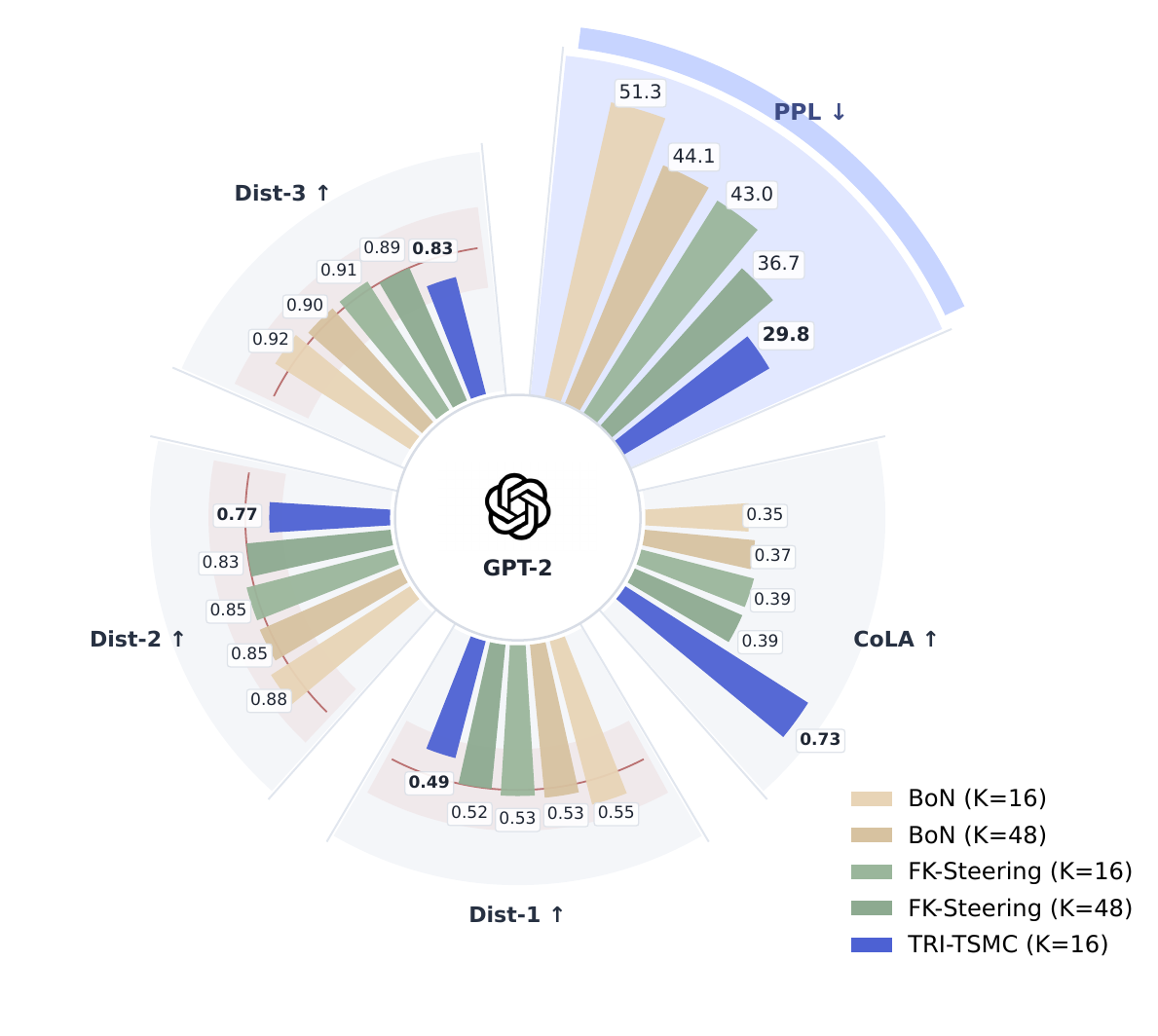}
\end{minipage}
\caption{
Alignment results on MDLM under GPT-2-based evaluation. 
\textbf{LHS:} Quantitative results for Base, BoN, FK-Steering, and TRI-TSMC; PPL is the primary alignment objective, while CoLA and Dist-$n$ are secondary quality indicators. 
\textbf{RHS:} Polar bar visualization of the same results, with the PPL sector visually emphasized. The shaded bands in the secondary-metric sectors (Dist-1/2/3) indicate the indicate a $\pm$15\% tolerance band centered at FK-Steering ($K=48$).
}
\label{fig:gpt2_overall_result}
\end{figure}

\subsection{Text-to-Image Diffusion Models}

\paragraph{Experimental setup.} 

We further evaluate TRI-TSMC on text-to-image inference-time alignment. We leverage the Stable Diffusion \citep{rombach2022high,podell2023sdxl} as the base model and sample images conditioned on a text prompt. We use Stable Diffusion v1.5 (SD v1.5) as the main base generator and report additional results on SDXL to show the generalizability of TRI-TSMC. The reward used for steering is the off-the-shelf ImageReward (IR) model \citep{xu2023imagereward}, which serves as the primary alignment objective in this experiment.  We also report GenEval \citep{ghosh2023geneval} for prompt fidelity and HPS \citep{wu2023human} as an additional human preference metric. For evaluating the IR and HPS score, we use the prompts from the ImageReward benchmark, which contains 100 prompts to evaluate alignment performance. We also use the official GenEval benchmark which contains 553 prompts to evaluate compositional prompt fidelity. We choose the same three random seeds as the text generation task with $\{42,43,44\}$ to report the results.

\paragraph{Baselines.}

We compare TRI-TSMC against four baselines: 
(i) \textbf{Base}, which samples directly from the diffusion model without reward steering ($K=1$); 
(ii) \textbf{Best-of-$N$ (BoN)}, which draws $K \in \{4,8\}$ independent images and returns the one with the highest IR score; 
(iii) \textbf{FK-Steering}, the particle-based inference-time steering method of \citet{singhal2025general} that uses IR-based Feynman--Kac potentials for particle weighting and resampling, with $K \in \{4,8\}$; 
(iv) \textbf{Post-training baselines}, including Diffusion-DPO \citep{wallace2024diffusion}. 
For BoN and FK-Steering, we report both $K=4$ and $K=8$ results. 
The $K=4$ setting matches the number of particles used by TRI-TSMC in one iteration, while the $K=8$ setting matches the total number of sampled trajectories used by TRI-TSMC across all iterations.

\paragraph{Implementation Details.} 

For all image-generation experiments, we use the base diffusion model as the proposal generator and adopt DDIM sampling with $100$ denoising steps and classifier-free guidance scale $7.5$, consistent with the setup used by \citet{singhal2025general}.  For particle-based inference-time methods, we use a resampling frequency of $20$, corresponding to the resampling schedule $[20, 40, 60, 80, 99]$.  At each resampling stage, we compute an intermediate reward on the denoised prediction $\hat{x}_0$, which provides a practical estimate of final image quality before the terminal denoising step. For SD v1.5 tasks, TRI-TSMC uses the \texttt{diff} potential to construct local Feynman--Kac weights. The twisting function is parameterized by a CNN. We use $K=4$ particles for all experiments in text-to-image tasks. More details can be found in \Cref{app:image_implement}.

\paragraph{Results analysis.} 

\Cref{tab:text2image_main_max} reports the main text-to-image results on SD v1.5.
IR is the primary objective in this experiment.
TRI-TSMC achieves the best IR score among all baselines.
It improves over Best-of-$N$ ($K=8$) from $0.945$ to $0.979$, and over FK-Steering ($K=8$) from $0.930$ to $0.979$. The improvement is larger when compared with the $K=4$ baselines. Overall, these results show that TRI-TSMC steers the sampling process toward images preferred by IR and achieves the best score among all the baselines with comparable sample trajectories. On the secondary metrics, TRI-TSMC obtains an HPS score of $0.270$. This is comparable to FK-Steering ($K=8$), but lower than Best-of-$N$ ($K=8$). This is reasonable because TRI-TSMC is trained and selected using IR rather than HPS. For GenEval, TRI-TSMC is comparable to FK-Steering in the current evaluation. This suggests that the improvement in IR does not come with a large drop in prompt fidelity.

We also provide additional experiment results on the SDXL in the \Cref{tab:t2i_sdxl} in \Cref{app:SDXL_results}. The results demonstrate that TRI-TSMC can also achieve the best performance on IR and comparable results in GenEval and HPS scores. As a qualitative comparison, \Cref{fig:footage_of_an_astronaut} shows the particle set before and after one trust-region twist update. Compared with the initialization particles, the updated particles are more consistently steered toward generations preferred by IR. This example suggests that TRI-TSMC does not simply change the particles toward high reward but also remains visually plausible and prompt-aligned.

\begin{table}[h]
    \centering
    \caption{Main results on text-to-image alignment with SD v1.5. ImageReward and HPS report the best-of-$K$ score averaged over three random seeds, with standard deviation. GenEval reports correct prompts averaged over three random seeds.}
    \begin{NiceTabular}{l c c c c}
        \CodeBefore
        \rowcolor{gray!10}{8}
        \columncolor[opacity=0.2]{blue}{3}
        \Body
        \toprule
        Method & $K$ & IR(*) $\uparrow$ & GenEval $\uparrow$ & HPS $\uparrow$ \\
        \midrule
        Base & 1
        & $0.233$ & $0.428$ & $0.246$ \\
        
        DPO & 1
        & $0.359$ & $0.441$ & $0.256$ \\
        
        Best-of-$N$ & 4
        & $0.750$ & $0.599$ & $0.267$ \\
        
        Best-of-$N$ & 8
        & $0.945$ & $\mathbf{0.670}$ & $\mathbf{0.276}$ \\
        
        FK-Steering & 4
        & $0.718$ & $0.553$ & $0.261$ \\
        
        FK-Steering & 8
        & $0.930$ & $0.630$ & $0.270$ \\
        
       \textbf{TRI-TSMC} & 4
        & $\mathbf{0.979}$ & $0.601$ & $0.270$ \\
        \bottomrule
    \end{NiceTabular}
    \label{tab:text2image_main_max}
\end{table}

\begin{figure}[h]
    \centering
    \includegraphics[width=\linewidth]{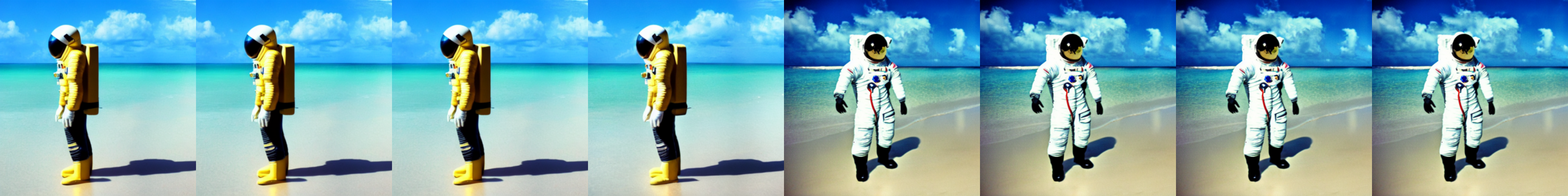}
    \caption{TRI-TSMC qualitative samples for the prompt ``footage of an astronaut.'' The left four images correspond to the initialization particles, while the right four images show the samples after one TRI-TSMC iteration. This visualization shows how TRI-TSMC updates the particle distribution toward higher-reward generations while remaining prompt-aligned.}
    \label{fig:footage_of_an_astronaut}
\end{figure}

\section{Conclusion}
\label{sec:conclusion}

We proposed Trust-Region Iterative Twisted Sequential Monte Carlo (TRI-TSMC) for inference-time alignment of diffusion-based generative models. The method learns look-ahead twisting functions through an exact KL-constrained path-space update followed by weighted maximum-likelihood projection. Theoretically, we formalized the value-function interpretation of the optimal twisting function and showed that the optimal twisting function yields a zero-variance sampler. We also showed that the exact trust-region update follows an escort path toward the target distribution, that the practical projection is a forward-KL projection, and that the trust-region path reduces residual importance-weight variance. Empirically, we evaluated TRI-TSMC on both discrete diffusion text generation and text-to-image diffusion alignment. On MDLM text generation under GPT-2-based evaluation, TRI-TSMC substantially reduced PPL and improved CoLA over baselines. Additional Qwen2.5-1.5B-based evaluation and qualitative text analysis further support the effectiveness of our method. On text-to-image generation with Stable Diffusion, TRI-TSMC improved ImageReward over matched-budget baselines. These results show that trust-region learning of twisting functions improves primary alignment objectives. Future work includes improving the quality--diversity trade-off, scaling to stronger image models, and distilling TRI-TSMC trajectories into fast aligned generators.

\newpage
\appendix

\section{Further Details in Preliminary and Methodology}

\label{app:further_details_methodology}

This section provides additional details for \Cref{sec:preliminary} and \Cref{sec:methodology}.

\paragraph{Standard SMC sampling.}

Consider the Feynman--Kac target
\begin{align*}
    \gamma(dx_{0:T}) = P(dx_{0:T})\prod_{t=0}^T g_t(x_t), \qquad
    P(dx_{0:T})=\mu(dx_0)\prod_{t=1}^T f_t(dx_t\mid x_{t-1}).
\end{align*}
Standard SMC keeps a set of particles and their ancestral paths. We initialize $X_0^k\sim \mu$ for $k=1,\dots,K$. At time $t$, the current particles are weighted by the local potential,
\begin{align*}
    W_t^k = \frac{g_t(X_t^k)}{\sum_{\ell=1}^K g_t(X_t^\ell)}, \qquad k=1,\dots,K.
\end{align*}
We then sample ancestors $A_t^k\sim \mathrm{Cat}(W_t^{1:K})$ and propagate
\begin{align*}
    X_{t+1}^k \sim f_{t+1}(\cdot\mid X_t^{A_t^k}), \qquad k=1,\dots,K.
\end{align*}
The ancestral path is copied together with the ancestor: if $A_t^k=a$, then the new path of particle $k$ first inherits $X_{0:t}^a$ and then appends the newly sampled state $X_{t+1}^k$. Thus, high-weight particles are duplicated and low-weight particles are removed before the next propagation step. In practice, resampling does not need to occur at every step; if $t$ is not a resampling stage, we simply set $A_t^k=k$ and propagate each particle forward without resampling.

This procedure approximates the Feynman--Kac path measure by repeatedly correcting the base dynamics with the potentials $g_t$. However, if the base dynamics $f_t$ is far from the reward-tilted target, many particles may still move through low-reward regions before being corrected by weighting and resampling. Twisted SMC modifies this propagation step directly.

\paragraph{Twisted SMC sampling.}

Given a positive twisting function sequence $\psi=\{\psi_t\}_{t=0}^T$, twisted SMC uses the twisted initial distribution and transition kernels
\begin{align*}
    \mu^\psi(dx_0) = \frac{\psi_0(x_0)}{c_\psi}\mu(dx_0), \qquad
    f_t^\psi(dx_t\mid x_{t-1}) = \frac{\psi_t(x_t)}{\tilde{\psi}_{t-1}(x_{t-1})}f_t(dx_t\mid x_{t-1}),
\end{align*}
where $c_\psi=\int \psi_0(x_0)\mu(dx_0)$ and $\tilde{\psi}_{t-1}(x_{t-1})=\int \psi_t(x_t)f_t(dx_t\mid x_{t-1})$. The role of $\psi_t$ is to give look-ahead guidance: states with larger estimated future potential are assigned larger probability under the twisted transition.

After twisting the transition, the original potentials are replaced by residual incremental weights. With the convention $\tilde{\psi}_T(\cdot)\equiv 1$, these residual weights are
\begin{align*}
    w_0^\psi(x_0) = g_0(x_0)c_\psi\frac{\tilde{\psi}_0(x_0)}{\psi_0(x_0)}, \qquad
    w_t^\psi(x_t) = g_t(x_t)\frac{\tilde{\psi}_t(x_t)}{\psi_t(x_t)}, \quad t=1,\dots,T.
\end{align*}
Their product recovers the global residual importance weight
\begin{align*}
    w^\psi(X_{0:T}) = w_0^\psi(X_0)\prod_{t=1}^T w_t^\psi(X_t) = \frac{d\gamma}{dP^\psi}(X_{0:T}).
\end{align*}
Therefore, twisting changes the proposal but preserves the target through the residual weights.

The twisted SMC sampler is the same resample--propagate procedure as standard SMC, except that both the propagation kernel and the weights are changed. We initialize $X_0^k\sim \mu^\psi$. At a resampling stage $t$, we compute
\begin{align*}
    W_t^k = \frac{w_t^\psi(X_t^k)}{\sum_{\ell=1}^K w_t^\psi(X_t^\ell)}, \qquad k=1,\dots,K.
\end{align*}
We then sample ancestors $A_t^k\sim \mathrm{Cat}(W_t^{1:K})$ and propagate with the twisted kernel,
\begin{align*}
    X_{t+1}^k \sim f_{t+1}^\psi(\cdot\mid X_t^{A_t^k}), \qquad k=1,\dots,K.
\end{align*}
As in standard SMC, the ancestral path of particle $k$ is updated by copying the path of its sampled ancestor and appending the new state. If no resampling is performed at step $t$, we set $A_t^k=k$ and only apply the twisted transition.

This makes the role of twisting functions explicit. Standard SMC uses the base kernel $f_t$ and relies on the potentials $g_t$ to correct particles after propagation. Twisted SMC instead uses $\psi_t$ to modify the transition before propagation and then uses the residual weights $w_t^\psi$ to correct the remaining mismatch. A good twisting function therefore reduces the variance of the residual weights. In the ideal case, the optimal twisting function makes these residual weights constant, giving a zero-variance importance sampler.

\paragraph{Incremental weights for twisted SMC.}

Using \eqref{equ:twisted_weight}, the residual importance weight under the twisted proposal can be decomposed as
\begin{align*}
    w^\psi(X_{0:T}) = w_0^\psi(X_0)\prod_{t=1}^T w_t^\psi(X_t),
\end{align*}
where
\begin{align*}
    w_0^\psi(x_0) = g_0(x_0)c_\psi\frac{\tilde{\psi}_0(x_0)}{\psi_0(x_0)}, \qquad w_t^\psi(x_t) = g_t(x_t)\frac{\tilde{\psi}_t(x_t)}{\psi_t(x_t)}, \quad t=1,\dots,T.
\end{align*}
Here we use the convention $\tilde{\psi}_T(\cdot)\equiv 1$. If $\psi=\psi^*$, then $\psi_t^*(x_t)=g_t(x_t)\tilde{\psi}_t^*(x_t)$ for $t<T$ and $\psi_T^*(x_T)=g_T(x_T)$. Therefore $w_t^{\psi^*}(x_t)\equiv 1$ for $t=1,\dots,T$. For the initial term, $w_0^{\psi^*}(x_0)=c_{\psi^*}$, and applying the recursion in \eqref{equ:optimal_psi} gives
\begin{align*}
    c_{\psi^*} = \int \mu(dx_0)\psi_0^*(x_0) = \int P(dx_{0:T})\prod_{t=0}^T g_t(x_t) = Z.
\end{align*}
Thus $w^{\psi^*}(X_{0:T})=Z$, which gives the zero-variance importance sampler.

\paragraph{Geometric annealing interpretation.}

The trust-region update in \Cref{lem:trust_region} gives
\begin{align*}
    \frac{dq_i^*}{dP_i}(X_{0:T}) \propto \bigg(\frac{d\pi}{dP_i}(X_{0:T})\bigg)^{\frac{1}{1+\lambda_i}}.
\end{align*}
If the update is applied exactly in path space, i.e., $P_{i+1}=q_i^*$, then
\begin{align*}
    \frac{dP_{i+1}}{dP_i}(X_{0:T}) \propto \bigg(\frac{d\pi}{dP_i}(X_{0:T})\bigg)^{\frac{1}{1+\lambda_i}}.
\end{align*}
Assume that $P_i$ lies on a geometric path between $P_0$ and $\pi$, namely
\begin{align*}
    \frac{dP_i}{dP_0}(X_{0:T}) \propto \bigg(\frac{d\pi}{dP_0}(X_{0:T})\bigg)^{\beta_i}.
\end{align*}
Then the next exact update satisfies the same form with
\begin{align*}
    \beta_{i+1} = \beta_i + (1-\beta_i)\frac{1}{1+\lambda_i}.
\end{align*}
Equivalently, starting from $\beta_0=0$,
\begin{align*}
    \beta_{i+1}=1-\prod_{m=0}^{i}\frac{\lambda_m}{1+\lambda_m}.
\end{align*}
This gives an annealing interpretation of the exact trust-region path: each update moves the proposal from the current twisted path measure toward the target path measure by a controlled amount.

\paragraph{Dual objective.}

The dual formulation corresponding to \eqref{equ:intermediate_distribution_optimization} is
\begin{align*}
    \lambda_i = \argmin_{\lambda\ge 0}\bigg[\lambda\epsilon+(1+\lambda)\log \mathbb{E}_{X\sim P_i}\bigg[\Big(R_i(X)\Big)^{\frac{1}{1+\lambda}}\bigg]\bigg].
\end{align*}
Since $R_i(X)=Z^{-1}w^{\psi(\theta_i)}(X)$, replacing $R_i$ by the unnormalized weight $w^{\psi(\theta_i)}$ only changes the objective by an additive constant independent of $\lambda$. Therefore, the empirical dual in \eqref{equ:empirical_dual} can be computed directly using unnormalized importance weights.

\paragraph{Weighted maximum-likelihood projection.}

The trust-region step produces an ideal intermediate distribution $q_i^*$ in path space. Projecting it back to the parameterized twisted family gives
\begin{align*}
    \theta_{i+1} = \argmin_\theta D_{\text{KL}}\big(q_i^*\|P^{\psi(\theta)}\big) = \argmax_\theta \mathbb{E}_{X\sim q_i^*}\big[\log P^{\psi(\theta)}(X)\big].
\end{align*}
Using importance reweighting with respect to $P_i$, this can be written as
\begin{align*}
    \theta_{i+1} = \argmax_\theta \mathbb{E}_{X\sim P_i}\bigg[\frac{dq_i^*}{dP_i}(X)\log P^{\psi(\theta)}(X)\bigg].
\end{align*}
By \Cref{lem:trust_region},
\begin{align*}
    \frac{dq_i^*}{dP_i}(X) \propto \Big(R_i(X)\Big)^{\frac{1}{1+\lambda_i}} \propto \Big(w^{\psi(\theta_i)}(X)\Big)^{\frac{1}{1+\lambda_i}}.
\end{align*}
Replacing the expectation by sampled trajectories gives the weighted maximum-likelihood objective in \eqref{equ:parameter_update_loss}.

Finally, the trajectory log-probability under a twisted model decomposes as
\begin{align*}
    \log P^{\psi(\theta)}(\xi^k) = \log \mu^{\psi(\theta)}(\xi_0^k)+\sum_{t=1}^T \log f_t^{\psi(\theta)}(\xi_t^k\mid \xi_{t-1}^k).
\end{align*}
Equivalently, using the definition of the twisted kernels in \eqref{equ:twisted_kernel},
\begin{align*}
    \log P^{\psi(\theta)}(\xi^k) &= \log \mu(\xi_0^k)+\log \psi_0(\xi_0^k;\theta)-\log c_{\psi(\theta)}\\
    &\qquad + \sum_{t=1}^T\Big[\log f_t(\xi_t^k\mid \xi_{t-1}^k)+\log \psi_t(\xi_t^k;\theta)-\log \tilde{\psi}_{t-1}(\xi_{t-1}^k;\theta)\Big].
\end{align*}
This is the quantity optimized in the weighted maximum-likelihood projection. In discrete diffusion language models, the normalizers $\tilde{\psi}_{t-1}$ are typically tractable because they are sums over the discrete state space.

\section{Proofs of Theoretical Results}
\label{sec:appendix_proof}

\subsection{Proof of Lemma \ref{lem:trust_region}}

\begin{proof}
Consider the Lagrangian
\begin{align*}
    \mathcal{L}(q,\lambda) = D_{\text{KL}}(q\|\pi) + \lambda\Big(D_{\text{KL}}(q\|P_i)-\epsilon\Big),
\end{align*}
with $\lambda \ge 0$. Optimizing $\mathcal{L}(q,\lambda)$ over distributions $q$ yields the Gibbs-form solution
\begin{align*}
    q_i^*(X_{0:T}) &\propto \pi(X_{0:T})^{\frac{1}{1+\lambda_i}} P_i(X_{0:T})^{\frac{\lambda_i}{1+\lambda_i}} \\
    &= P_i(X_{0:T}) \bigg(\frac{\pi(X_{0:T})}{P_i(X_{0:T})}\bigg)^{\frac{1}{1+\lambda_i}} \\
    &= P_i(X_{0:T}) \Big(R_i(X_{0:T})\Big)^{\frac{1}{1+\lambda_i}},
\end{align*}
which gives \eqref{equ:trust_region_closed_form}. Since $P_i$ is feasible for \eqref{equ:intermediate_distribution_optimization} because $D_{\text{KL}}(P_i\|P_i)=0\le \epsilon$, optimality of $q_i^*$ implies
\begin{align*}
    D_{\text{KL}}(q_i^*\|\pi)\leq D_{\text{KL}}(P_i\|\pi).
\end{align*}
Finally, if $P_{i+1}=q_i^*$ exactly, then
\begin{align*}
    \frac{dP_{i+1}}{dP_i}(X_{0:T}) \propto \Big(R_i(X_{0:T})\Big)^{\frac{1}{1+\lambda_i}} = \bigg(\frac{d\pi}{dP_i}(X_{0:T})\bigg)^{\frac{1}{1+\lambda_i}}.
\end{align*}
Expanding this recursion gives the geometric annealing path
\begin{align*}
    \frac{dP_i}{dP_0}(X_{0:T}) \propto \bigg(\frac{d\pi}{dP_0}(X_{0:T})\bigg)^{\beta_i}, \qquad
    \beta_{i+1} = \beta_i + (1-\beta_i)\frac{1}{1+\lambda_i}.
\end{align*}
Equivalently, $\beta_i = 1-\prod_{m=0}^{i-1}\frac{\lambda_m}{1+\lambda_m}$.
\end{proof}

\subsection{Proof of Theorem \ref{thm:soft_value_equivalence}}

\begin{proof}
By \eqref{equ:optimal_psi}, the optimal twisting function satisfies
\begin{align*}
    \psi_t^*(x_t) = g_t(x_t)\int \psi_{t+1}^*(x_{t+1}) f_{t+1}(dx_{t+1}\mid x_t), \qquad t=0,\dots,T-1,
\end{align*}
with terminal condition $\psi_T^*(x_T)=g_T(x_T)$. Taking logarithms on both sides and multiplying by $\alpha$, we obtain
\begin{align*}
    \alpha \log \psi_t^*(x_t) = \alpha \log g_t(x_t) + \alpha \log \int \psi_{t+1}^*(x_{t+1}) f_{t+1}(dx_{t+1}\mid x_t).
\end{align*}
Using the definition $V_t^*(x_t)=\alpha \log \psi_t^*(x_t)$ in \eqref{equ:value_function_def}, this becomes
\begin{align*}
    V_t^*(x_t) = \alpha \log g_t(x_t) + \alpha \log \int \exp\bigg(\frac{V_{t+1}^*(x_{t+1})}{\alpha}\bigg) f_{t+1}(dx_{t+1}\mid x_t),
\end{align*}
which proves \eqref{equ:soft_bellman_general}. The terminal condition follows directly from
\begin{align*}
    V_T^*(x_T)=\alpha \log \psi_T^*(x_T)=\alpha \log g_T(x_T).
\end{align*}
Next, for $t=1,\dots,T$, the incremental twisted weights are
\begin{align*}
    w_t^{\psi^*}(x_t) = g_t(x_t)\frac{\tilde{\psi}_t^*(x_t)}{\psi_t^*(x_t)}.
\end{align*}
By \eqref{equ:optimal_psi}, we have $\psi_t^*(x_t)=g_t(x_t)\tilde{\psi}_t^*(x_t)$, hence
\begin{align*}
    w_t^{\psi^*}(x_t) = g_t(x_t)\frac{\tilde{\psi}_t^*(x_t)}{g_t(x_t)\tilde{\psi}_t^*(x_t)} = 1.
\end{align*}
For $t=0$, we have
\begin{align*}
    w_0^{\psi^*}(x_0) = g_0(x_0)c_{\psi^*}\frac{\tilde{\psi}_0^*(x_0)}{\psi_0^*(x_0)}.
\end{align*}
Using $\psi_0^*(x_0)=g_0(x_0)\tilde{\psi}_0^*(x_0)$ again gives $w_0^{\psi^*}(x_0)=c_{\psi^*}$, so it remains to show that $c_{\psi^*}=Z$. By definition,
\begin{align*}
    c_{\psi^*} = \int \mu(dx_0)\psi_0^*(x_0).
\end{align*}
Applying the recursion in \eqref{equ:optimal_psi} repeatedly, we obtain
\begin{align*}
    c_{\psi^*} = \int \mu(dx_0)\prod_{t=1}^T f_t(dx_t\mid x_{t-1}) \prod_{t=0}^T g_t(x_t) = \int P(dx_{0:T})\prod_{t=0}^T g_t(x_t) = Z,
\end{align*}
Therefore $w^{\psi^*}(X_{0:T}) = Z$, which implies that the importance sampler has zero variance.
\end{proof}

\subsection{Proof of Proposition \ref{prop:escort_geometry}}

\begin{proof}
First, by definition,
\begin{align*}
    q_{i,0}(dX) = \frac{1}{\mathbb{E}_{X\sim P_i}[1]} P_i(dX) = P_i(dX).
\end{align*}
Since $\mathbb{E}_{X\sim P_i}[R_i(X)] = 1$, we also have
\begin{align*}
    q_{i,1}(dX) = R_i(X)P_i(dX) = \pi(dX).
\end{align*}
By \Cref{lem:trust_region},
\begin{align*}
    \frac{dq_i^*}{dP_i}(X) \propto \Big(R_i(X)\Big)^{\frac{1}{1+\lambda_i}}.
\end{align*}
After normalization, this is exactly the escort family \eqref{equ:escort_family} with $\tau_i=\frac{1}{1+\lambda_i}$.

Next, from \eqref{equ:escort_family} and \eqref{equ:log_partition_escort},
\begin{align*}
    \frac{dq_{i,\tau}}{dP_i}(X) = \exp\Big(\tau \log R_i(X)-\Lambda_i(\tau)\Big).
\end{align*}
Hence we have
\begin{align*}
    D_{\text{KL}}(q_{i,\tau}\|P_i) &= \tau \mathbb{E}_{X\sim q_{i,\tau}}[\log R_i(X)]-\Lambda_i(\tau), \\
    D_{\text{KL}}(q_{i,\tau}\|\pi) &= (\tau-1)\mathbb{E}_{X\sim q_{i,\tau}}[\log R_i(X)]-\Lambda_i(\tau).
\end{align*}
Differentiating \eqref{equ:log_partition_escort}, we obtain
\begin{align*}
    \Lambda_i'(\tau) &= \frac{\mathbb{E}_{X\sim P_i}[R_i(X)^\tau \log R_i(X)]}{\mathbb{E}_{X\sim P_i}[R_i(X)^\tau]} = \mathbb{E}_{X\sim q_{i,\tau}}[\log R_i(X)], \\
    \Lambda_i''(\tau) &= \mathrm{Var}_{X\sim q_{i,\tau}}\big(\log R_i(X)\big).
\end{align*}
Therefore,
\begin{align*}
    \frac{d}{d\tau} D_{\text{KL}}(q_{i,\tau}\|P_i) &= \tau \, \mathrm{Var}_{X\sim q_{i,\tau}}\big(\log R_i(X)\big) \geq 0, \\
    \frac{d}{d\tau} D_{\text{KL}}(q_{i,\tau}\|\pi) &= -(1-\tau)\, \mathrm{Var}_{X\sim q_{i,\tau}}\big(\log R_i(X)\big) \leq 0.
\end{align*}
This proves the monotonicity properties.

If $P_i \neq \pi$, then $R_i$ is not almost surely constant under $P_i$. Since $q_{i,\tau} \ll P_i$ for every $\tau \in (0,1)$, this implies $\mathrm{Var}_{X\sim q_{i,\tau}}\big(\log R_i(X)\big) > 0$ for every $\tau \in (0,1)$. Hence $\tau \mapsto D_{\text{KL}}(q_{i,\tau}\|P_i)$ is strictly increasing on $(0,1]$, and $\tau \mapsto D_{\text{KL}}(q_{i,\tau}\|\pi)$ is strictly decreasing on $[0,1)$. Finally, since
\begin{align*}
    D_{\text{KL}}(q_{i,0}\|P_i)=0, \qquad D_{\text{KL}}(q_{i,1}\|P_i)=D_{\text{KL}}(\pi\|P_i),
\end{align*}
continuity and strict monotonicity imply that, for every $\epsilon \in (0, D_{\text{KL}}(\pi\|P_i))$, there exists a unique $\tau_i \in (0,1)$ such that $D_{\text{KL}}(q_{i,\tau_i}\|P_i)=\epsilon$.
\end{proof}

\subsection{Proof of \cref{lem:projection_as_mle}}

\begin{proof}
By the definition of KL divergence,
\begin{align*}
    D_{\text{KL}}\big(q_i^* \| P^{\psi(\theta)}\big) = \mathbb{E}_{X\sim q_i^*}\bigg[\log \frac{q_i^*(X)}{P^{\psi(\theta)}(X)}\bigg] = \mathbb{E}_{X\sim q_i^*}[\log q_i^*(X)] - \mathbb{E}_{X\sim q_i^*}[\log P^{\psi(\theta)}(X)].
\end{align*}
Rearranging gives
\begin{align*}
    L_i(\theta) = D_{\text{KL}}\big(q_i^* \| P^{\psi(\theta)}\big) - \mathbb{E}_{X\sim q_i^*}\big[\log q_i^*(X)\big].
\end{align*}
Since the second term does not depend on $\theta$, minimizing $L_i(\theta)$ is equivalent to minimizing $D_{\text{KL}}(q_i^* \| P^{\psi(\theta)})$. Therefore, if $L_i(\theta_{i+1}) \leq \inf_{\theta} L_i(\theta) + \eta_i$, subtracting the same constant $-\mathbb{E}_{X\sim q_i^*}[\log q_i^*(X)]$ from both sides yields
\begin{align*}
    D_{\text{KL}}\big(q_i^* \| P_{i+1}\big) \leq \inf_{\theta} D_{\text{KL}}\big(q_i^* \| P^{\psi(\theta)}\big) + \eta_i,
\end{align*}
which proves the result.
\end{proof}

\subsection{Projection Error Bound}
\label{app:projection_error}

The following result quantifies how much of the exact trust-region improvement remains after projecting $q_i^*$ back to the parameterized twisted family.

\begin{theorem}[Projection error bound]
\label{thm:projection_error}
Let $q_i^*$ be the exact trust-region update, and let $P_{i+1}=P^{\psi(\theta_{i+1})}$ be the projected twisted path measure obtained by the weighted maximum likelihood step. Define $\Delta_i := D_{\text{KL}}(P_i\|\pi) - D_{\text{KL}}(q_i^*\|\pi) \geq 0$ and $\delta_i^{\rightarrow} := D_{\text{KL}}(q_i^* \| P_{i+1})$, $\delta_i^{\leftarrow} := D_{\text{KL}}(P_{i+1} \| q_i^*)$. Assume that the log-density ratio $\ell_i(X) := \log \frac{dq_i^*}{d\pi}(X)$ is bounded, with $\|\ell_i\|_\infty \le M_i$. Then
\begin{align}
\label{equ:projection_error_bound_new}
    D_{\text{KL}}(P_{i+1}\|\pi) \leq D_{\text{KL}}(P_i\|\pi) - \Delta_i + \delta_i^{\leftarrow} + M_i \sqrt{2\delta_i^{\rightarrow}}.
\end{align}
\end{theorem}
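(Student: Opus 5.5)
Since $D_{\text{KL}}(P_i\|\pi) - \Delta_i = D_{\text{KL}}(q_i^*\|\pi)$ by definition of $\Delta_i$, the claim \eqref{equ:projection_error_bound_new} is equivalent to
\begin{align*}
    D_{\text{KL}}(P_{i+1}\|\pi) \le D_{\text{KL}}(q_i^*\|\pi) + \delta_i^{\leftarrow} + M_i\sqrt{2\delta_i^{\rightarrow}}.
\end{align*}
The plan is to prove this reduced inequality by a chain-rule decomposition of $\log\frac{dP_{i+1}}{d\pi}$ through the intermediate measure $q_i^*$. We may assume $\delta_i^{\leftarrow}<\infty$, since otherwise there is nothing to prove. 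Then $P_{i+1}\ll q_i^*$, and because $\ell_i$ is bounded, $q_i^*$ and $\pi$ are mutually absolutely continuous. Hence $P_{i+1}\ll\pi$ and
\begin{align*}
    \log\frac{dP_{i+1}}{d\pi} = \log\frac{dP_{i+1}}{dq_i^*} + \ell_i .
\end{align*}
Taking expectations under $P_{i+1}$ gives the exact identity
\begin{align*}
    D_{\text{KL}}(P_{i+1}\|\pi) = \delta_i^{\leftarrow} + \mathbb{E}_{P_{i+1}}[\ell_i].
\end{align*}

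It then remains to compare $\mathbb{E}_{P_{i+1}}[\ell_i]$ with $\mathbb{E}_{q_i^*}[\ell_i] = D_{\text{KL}}(q_i^*\|\pi)$. We write
\begin{align*}
    \mathbb{E}_{P_{i+1}}[\ell_i] = D_{\text{KL}}(q_i^*\|\pi) + \big(\mathbb{E}_{P_{i+1}}[\ell_i]-\mathbb{E}_{q_i^*}[\ell_i]\big).
\end{align*}
Since $\|\ell_i\|_\infty\le M_i$, the difference of expectations is bounded by $M_i\|P_{i+1}-q_i^*\|_{1} = 2M_i\,\mathrm{TV}(P_{i+1},q_i^*)$. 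Pinsker's inequality, applied in the direction $\mathrm{TV}(q_i^*,P_{i+1})\le\sqrt{\delta_i^{\rightarrow}/2}$, then gives the bound $2M_i\sqrt{\delta_i^{\rightarrow}/2}=M_i\sqrt{2\delta_i^{\rightarrow}}$. The $L^1$-versus-TV normalization works out exactly to the stated constant. Combining the three displays yields the reduced inequality, and hence the claim.

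The main obstacle is bookkeeping rather than a real difficulty. We must make sure the identity $D_{\text{KL}}(q_i^*\|\pi)=\mathbb{E}_{q_i^*}[\ell_i]$ and the finiteness or absolute-continuity requirements hold. Boundedness of $\ell_i$ gives integrability under every measure, and the case $\delta_i^{\leftarrow}=\infty$ is trivial, so these checks go through. We should also note that only the forward projection error $\delta_i^{\rightarrow}$, which is the quantity the weighted maximum-likelihood step of \Cref{lem:projection_as_mle} controls, enters through Pinsker's inequality.
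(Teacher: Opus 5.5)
Your proposal is correct and follows the paper's proof step for step: the chain-rule split $D_{\text{KL}}(P_{i+1}\|\pi)=\delta_i^{\leftarrow}+\mathbb{E}_{P_{i+1}}[\ell_i]$, adding and subtracting $\mathbb{E}_{q_i^*}[\ell_i]=D_{\text{KL}}(q_i^*\|\pi)$, the bound by $2M_i\,\mathrm{TV}(P_{i+1},q_i^*)$, and Pinsker in the direction of $\delta_i^{\rightarrow}$. Your extra handling of absolute continuity and of the case $\delta_i^{\leftarrow}=\infty$ is sound, and the paper leaves it implicit.
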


\begin{proof}
By definition, we have
\begin{align*}
    D_{\text{KL}}(P_{i+1}\|\pi) = \mathbb{E}_{X\sim P_{i+1}}\bigg[\log \frac{dP_{i+1}}{d\pi}(X)\bigg].
\end{align*}
Insert $q_i^*$ between $P_{i+1}$ and $\pi$:
\begin{align*}
    \log \frac{dP_{i+1}}{d\pi}(X) = \log \frac{dP_{i+1}}{dq_i^*}(X) + \log \frac{dq_i^*}{d\pi}(X).
\end{align*}
Hence we have
\begin{align*}
    D_{\text{KL}}(P_{i+1}\|\pi) = D_{\text{KL}}(P_{i+1}\|q_i^*) + \mathbb{E}_{X\sim P_{i+1}}[\ell_i(X)].
\end{align*}
Adding and subtracting the expectation under $q_i^*$ gives
\begin{align*}
    D_{\text{KL}}(P_{i+1}\|\pi) = \delta_i^{\leftarrow} + \mathbb{E}_{X\sim q_i^*}[\ell_i(X)] + \Big(\mathbb{E}_{X\sim P_{i+1}}[\ell_i(X)] - \mathbb{E}_{X\sim q_i^*}[\ell_i(X)]\Big).
\end{align*}
Since $\ell_i(X)=\log \frac{dq_i^*}{d\pi}(X)$, we have
\begin{align*}
    \mathbb{E}_{X\sim q_i^*}[\ell_i(X)] = D_{\text{KL}}(q_i^*\|\pi).
\end{align*}
Therefore,
\begin{align*}
    D_{\text{KL}}(P_{i+1}\|\pi) = \delta_i^{\leftarrow} + D_{\text{KL}}(q_i^*\|\pi) + \Big(\mathbb{E}_{X\sim P_{i+1}}[\ell_i(X)] - \mathbb{E}_{X\sim q_i^*}[\ell_i(X)]\Big).
\end{align*}
We now bound the last term. By the total variation inequality,
\begin{align*}
    \bigg|\mathbb{E}_{X\sim P_{i+1}}[\ell_i(X)] - \mathbb{E}_{X\sim q_i^*}[\ell_i(X)]\bigg| \leq 2\|\ell_i\|_\infty \,\mathrm{TV}(P_{i+1},q_i^*).
\end{align*}
Using $\|\ell_i\|_\infty \le M_i$ and Pinsker's inequality,
\begin{align*}
    \mathrm{TV}(P_{i+1},q_i^*) \leq \sqrt{\frac{1}{2}D_{\text{KL}}(q_i^*\|P_{i+1})} = \sqrt{\frac{1}{2}\delta_i^{\rightarrow}},
\end{align*}
we obtain
\begin{align*}
    \mathbb{E}_{X\sim P_{i+1}}[\ell_i(X)] - \mathbb{E}_{X\sim q_i^*}[\ell_i(X)] \leq M_i\sqrt{2\delta_i^{\rightarrow}}.
\end{align*}
Substituting this into the previous display yields
\begin{align*}
    D_{\text{KL}}(P_{i+1}\|\pi) \leq \delta_i^{\leftarrow} + D_{\text{KL}}(q_i^*\|\pi) + M_i\sqrt{2\delta_i^{\rightarrow}}.
\end{align*}
Finally, by the definition of $\Delta_i$,
\begin{align*}
    D_{\text{KL}}(q_i^*\|\pi) = D_{\text{KL}}(P_i\|\pi)-\Delta_i.
\end{align*}
Substituting this identity proves \eqref{equ:projection_error_bound_new}.
\end{proof}

\begin{corollary}
\label{cor:strict_improvement_after_projection}
Under the assumptions of \Cref{thm:projection_error}, if $\delta_i^{\leftarrow} + M_i\sqrt{2\delta_i^{\rightarrow}} < \Delta_i$, then
\begin{align*}
    D_{\text{KL}}(P_{i+1}\|\pi) < D_{\text{KL}}(P_i\|\pi).
\end{align*}
\end{corollary}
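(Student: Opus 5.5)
This is immediate from \Cref{thm:projection_error}, so I would simply chain the bound \eqref{equ:projection_error_bound_new} with the hypothesis. \Cref{thm:projection_error} gives
\begin{align*}
    D_{\text{KL}}(P_{i+1}\|\pi) \leq D_{\text{KL}}(P_i\|\pi) - \Big(\Delta_i - \delta_i^{\leftarrow} - M_i\sqrt{2\delta_i^{\rightarrow}}\Big).
\end{align*}
The assumption $\delta_i^{\leftarrow} + M_i\sqrt{2\delta_i^{\rightarrow}} < \Delta_i$ says exactly that the parenthesized quantity is strictly positive. Hence the right-hand side is strictly smaller than $D_{\text{KL}}(P_i\|\pi)$.

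Before concluding, I would check finiteness so that the subtraction is meaningful and no $\infty - \infty$ issue arises. The hypothesis forces $\delta_i^{\leftarrow}$ and $\delta_i^{\rightarrow}$ to be finite, because otherwise the left side of the hypothesis is $+\infty$ and cannot be less than $\Delta_i$. Likewise $\Delta_i$ must be finite and positive. This requires $D_{\text{KL}}(P_i\|\pi) < \infty$; I would either state this as implicit in the definition of $\Delta_i$ or note that $\Delta_i$ is only defined as a real number in that case.

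With these finiteness facts in place, the strict inequality $D_{\text{KL}}(P_{i+1}\|\pi) < D_{\text{KL}}(P_i\|\pi)$ follows directly.

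The only potential obstacle is this bookkeeping of infinite values, and it is handled by the finiteness remark above. No new estimates are needed.
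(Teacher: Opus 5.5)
Your proof is correct and matches the paper, which gives no separate argument and treats the corollary as a direct rearrangement of the bound \eqref{equ:projection_error_bound_new}; the hypothesis makes the net gain $\Delta_i - \delta_i^{\leftarrow} - M_i\sqrt{2\delta_i^{\rightarrow}}$ strictly positive. Your finiteness bookkeeping is extra care the paper leaves implicit. Strictly, the hypothesis does not rule out $\Delta_i=+\infty$, but in that case the theorem's intermediate bound $D_{\text{KL}}(P_{i+1}\|\pi)\le \delta_i^{\leftarrow}+D_{\text{KL}}(q_i^*\|\pi)+M_i\sqrt{2\delta_i^{\rightarrow}}<\infty$ still yields the conclusion.
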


The bound decomposes the effect of projection into the ideal gain $\Delta_i$ from the exact trust-region update and the approximation cost $\delta_i^{\leftarrow}+M_i\sqrt{2\delta_i^{\rightarrow}}$. Thus, the practical update preserves the improvement whenever the projection error is smaller than the ideal gain.

\subsection{Proof of \cref{thm:variance_reduction}}

\begin{proof}
From \eqref{equ:escort_family}, we have
\begin{align*}
    \frac{dq_{i,\tau}}{dP_i}(X) = \exp\Big(\tau \log R_i(X)-\Lambda_i(\tau)\Big).
\end{align*}
Since $R_i(X)=\frac{d\pi}{dP_i}(X)$, it follows that
\begin{align*}
    \frac{d\pi}{dq_{i,\tau}}(X) = \frac{d\pi/dP_i}{dq_{i,\tau}/dP_i}(X) = \exp\big(\Lambda_i(\tau)\big)R_i(X)^{1-\tau}.
\end{align*}
Therefore,
\begin{align*}
    1+\chi^2(\pi\|q_{i,\tau}) = \mathbb{E}_{X\sim q_{i,\tau}}\bigg[\bigg(\frac{d\pi}{dq_{i,\tau}}(X)\bigg)^2\bigg] = \exp\big(2\Lambda_i(\tau)\big)\mathbb{E}_{X\sim q_{i,\tau}}\Big[R_i(X)^{2(1-\tau)}\Big].
\end{align*}
Rewriting the expectation under $q_{i,\tau}$ in terms of $P_i$ gives
\begin{align*}
    \mathbb{E}_{X\sim q_{i,\tau}}\Big[R_i(X)^{2(1-\tau)}\Big] = \frac{\mathbb{E}_{X\sim P_i}[R_i(X)^{2-\tau}]}{\mathbb{E}_{X\sim P_i}[R_i(X)^\tau]} = \exp\Big(\Lambda_i(2-\tau)-\Lambda_i(\tau)\Big),
\end{align*}
and hence
\begin{align*}
    1+\chi^2(\pi\|q_{i,\tau}) = \exp\Big(\Lambda_i(\tau)+\Lambda_i(2-\tau)\Big).
\end{align*}

Next, since $\gamma = Z\pi$, we have $\frac{d\gamma}{dq_{i,\tau}}(X)= Z \frac{d\pi}{dq_{i,\tau}}(X)$. Therefore,
\begin{align*}
    \frac{\mathrm{Var}_{X\sim q_{i,\tau}}\big[\frac{d\gamma}{dq_{i,\tau}}(X)\big]}{Z^2}
    &= \mathrm{Var}_{X\sim q_{i,\tau}}\bigg[\frac{d\pi}{dq_{i,\tau}}(X)\bigg] \\
    &= \mathbb{E}_{X\sim q_{i,\tau}}\bigg[\bigg(\frac{d\pi}{dq_{i,\tau}}(X)\bigg)^2\bigg] - \bigg(\mathbb{E}_{X\sim q_{i,\tau}}\bigg[\frac{d\pi}{dq_{i,\tau}}(X)\bigg]\bigg)^2.
\end{align*}
The second term equals $1$ because $\mathbb{E}_{X\sim q_{i,\tau}}\big[\frac{d\pi}{dq_{i,\tau}}(X)\big]=\int \pi(dX)=1$. Thus,
\begin{align*}
    \frac{\mathrm{Var}_{X\sim q_{i,\tau}}\big[\frac{d\gamma}{dq_{i,\tau}}(X)\big]}{Z^2} = \mathbb{E}_{X\sim q_{i,\tau}}\bigg[\bigg(\frac{d\pi}{dq_{i,\tau}}(X)\bigg)^2\bigg]-1 = \chi^2(\pi\|q_{i,\tau}),
\end{align*}
which proves \eqref{equ:variance_gamma_ratio}.

It remains to prove monotonicity. Taking logarithms of the closed-form expression above yields
\begin{align*}
    \log\Big(1+\chi^2(\pi\|q_{i,\tau})\Big) = \Lambda_i(\tau)+\Lambda_i(2-\tau).
\end{align*}
Differentiating with respect to $\tau$, we obtain
\begin{align*}
    \frac{d}{d\tau}\log\Big(1+\chi^2(\pi\|q_{i,\tau})\Big) = \Lambda_i'(\tau)-\Lambda_i'(2-\tau).
\end{align*}
Since $\Lambda_i(\tau)=\log \mathbb{E}_{X\sim P_i}[R_i(X)^\tau]$ is convex in $\tau$, the derivative $\Lambda_i'(\tau)$ is nondecreasing. For $\tau\in[0,1]$, we have $\tau \le 2-\tau$, so $\Lambda_i'(\tau)-\Lambda_i'(2-\tau)\le 0$. Hence $\tau \mapsto \chi^2(\pi\|q_{i,\tau})$ is nonincreasing on $[0,1]$. If $P_i\neq \pi$, then $R_i$ is not almost surely constant, so $\Lambda_i$ is strictly convex and $\Lambda_i'$ is strictly increasing. Therefore, for $\tau\in[0,1)$,
\begin{align*}
    \Lambda_i'(\tau)-\Lambda_i'(2-\tau) < 0,
\end{align*}
which implies that $\tau \mapsto \chi^2(\pi\|q_{i,\tau})$ is strictly decreasing on $[0,1)$.
\end{proof}

\section{Additional Text Generation Experiments}
\label{app:text_details}

\subsection{Baseline Implementation Details}
\label{app:baseline_details}

All text-generation experiments use the same MDLM base generator trained on OpenWebText, the same 15 discriminative prompts from the PPLM benchmark, and 200 reverse diffusion steps. For each random seed, we generate 20 batches per prompt, giving 300 samples per seed. All reported numbers are averaged over seeds $\{42,43,44\}$.

We compare against three inference-time baselines. (i) \textbf{Base} directly samples from MDLM without reward steering and uses $K=1$. (ii) \textbf{Best-of-N} draws $K$ independent final candidates and selects the one with the highest terminal reward. (iii) \textbf{FK-Steering} applies particle resampling during the reverse diffusion process under a Feynman--Kac formulation. In the main text, FK-Steering uses the single PPL reward $r_{\text{PPL}}$ to construct local potentials, with $K\in\{16,48\}$.

For FK-Steering and TRI-TSMC, intermediate rewards are evaluated every 20 diffusion steps. At each reward-evaluation stage, we sample $n=4$ candidate $x_0$ reconstructions from the predicted logits and aggregate their rewards with log-mean-exp. Reward computation is truncated to the prompt plus the first 50 generated tokens. We set the reward scale to $\lambda=10$ for all particle-based steering methods. For GPT-2-based evaluation, $r_{\text{PPL}}$ is computed with GPT-2; for Qwen-based evaluation, it is computed with Qwen2.5-1.5B. The CoLA score is computed using a RoBERTa-based CoLA classifier.

\subsection{Text Generation Implementation Details}
\label{app:implement_details}

For text generation, TRI-TSMC uses $K=16$ particles and $3$ trust-region
iterations. Therefore, each run uses $48$ sampled trajectories in total, matching the total sampled trajectories used by the $K=48$ BoN and FK-Steering baselines. At each trust-region iteration, TRI-TSMC runs the current twisted SMC sampler, computes residual trajectory weights, solves the empirical trust-region dual, and updates the twisting function by weighted maximum likelihood.

The TRI-TSMC steering reward is a mixed reward,
\begin{align*}
    r_{\text{mix}} = r_{\text{PPL}} + \beta r_{\text{CoLA}} .
\end{align*}
For the GPT-2-based experiment, we use $\beta=0.55$ with the \texttt{max}
potential. For the Qwen2.5-1.5B-based experiment, we use $\beta=0.56$ with the
\texttt{diff} potential. In the reported runs, reward components are used
without additional normalization. Given the intermediate reward estimate
$\hat r_t$ at resampling stage $t$, the two potential variants are
\begin{align*}
    \log g_t = \lambda(\hat r_t-\hat r_{t-1})
    \quad\text{(\texttt{diff})},
    \qquad
    \log g_t = \lambda\max(\hat r_t,\hat r_{t-1})
    \quad\text{(\texttt{max})}.
\end{align*}
The \texttt{diff} potential uses stepwise reward improvement, while the
\texttt{max} potential gives a more conservative local signal.

The twisting function is a small token-level MLP that adds a logit bias to the MDLM reverse transition. Given the corrupted token sequence $x_t$ and normalized diffusion time $t$, it outputs $b_\theta(x_t,t)\in\mathbb{R}^{L\times V}$ and defines the twisted proposal as
\begin{align*}
    q_\theta(x_{t-1}\mid x_t) \propto q_{\text{base}}(x_{t-1}\mid x_t) \exp \big(b_\theta(x_t,t)\big),
\end{align*}
applied position-wise on masked coordinates. The architecture and main hyperparameters are summarized in \Cref{tab:text-twist-arch}.

\begin{table}[t]
\centering
\caption{TRI-TSMC architecture and main hyperparameters for discrete text generation.}
\label{tab:text-twist-arch}
\small
\setlength{\tabcolsep}{6pt}
\begin{tabular}{p{0.28\linewidth}p{0.64\linewidth}}
\toprule
Component & Setting \\
\midrule
Twisting network & Token/time-conditioned MLP with token embedding, time embedding, and mean-pooled sequence context. \\
Architecture size & Embedding dimension $48$, hidden dimension $192$, and $64$ time bins. \\
Output & Additive vocabulary logit bias; output layer zero-initialized. \\
Sampling budget & $K=16$ particles, $3$ trust-region iterations, $48$ total sampled trajectories. \\
Reward scale & $\lambda=10$. \\
Trust-region update & $\epsilon=0.2$, empirical dual solve, weighted maximum-likelihood projection. \\
Optimization & Adam, learning rate $10^{-4}$, $120$ twist updates per iteration. \\
Other settings & Proposal temperature $1.0$, twist logit scale $1.0$, gradient clipping norm $1.0$. \\
\bottomrule
\end{tabular}
\end{table}

\subsection{Qwen2.5-1.5B-based Evaluation}
\label{app:qwen_evaluation}

\paragraph{Implementation Details.}
All results are averaged over three seeds $\{42, 43, 44\}$ on 15 prompts $\times$ $20$ batches per prompt. For FK-Steering and TRI-TSMC, we set the reward scale to $\lambda = 10$ and resample every 20 steps. TRI-TSMC uses $K=16$ particles, $3$ outer iterations, trust-region radius $\epsilon = 0.2$, twist learning rate $10^{-4}$, $120$ twist updates per iteration, and a twisting network with embedding and hidden dimensions $48$ and $192$, respectively. In GPT-2-based alignment, TRI-TSMC uses the max-potential and mixed reward weight $\beta = 0.55$; in Qwen2.5-1.5B-based alignment, TRI-TSMC uses the diff-potential and mixed reward weight $\beta = 0.56$. We choose different potential variants because the two reward models show different empirical behavior: GPT-2-based mixed rewards are locally noisier, for which the more conservative max-potential is more robust, while Qwen2.5-1.5B-based rewards provide a smoother stepwise signal, for which the diff-potential is more effective. All experiments were conducted on NVIDIA H100 GPUs, with each run using a single GPU.

\begin{figure}[h]
\centering
\begin{minipage}[t]{0.54\textwidth}
\vspace{12mm}
\centering
\setlength{\tabcolsep}{3pt}
\fontsize{8.5pt}{11pt}\selectfont
\begin{NiceTabular}{lcccccc}
\CodeBefore
  \rowcolor{gray!10}{7}
  \columncolor[opacity=0.2]{blue}{3}
\Body
\toprule
Method & $K$ & PPL(*) $\downarrow$ & CoLA $\uparrow$ & Dist-1 $\uparrow$ & Dist-2 $\uparrow$ & Dist-3 $\uparrow$ \\
\midrule
Base & 1  &  104.427 & 0.223 & \textbf{0.594} & \textbf{0.919} & \textbf{0.941} \\
BoN & 16 &  51.301  & 0.347 & 0.552 & 0.879 & 0.922 \\
BoN & 48 &  44.109  & 0.371 & 0.532 & 0.852 & 0.904 \\
FK-Steering & 16 & 17.153 & 0.510 & 0.477 & 0.793 & 0.860 \\
FK-Steering & 48 & 10.231 & 0.564 & 0.442 & 0.721 & 0.785 \\
\textbf{TRI-TSMC} & 16 & \textbf{9.895} & \textbf{0.712} & 0.389 & 0.627 & 0.691 \\
\bottomrule
\end{NiceTabular}
\end{minipage}
\hfill
\begin{minipage}[t]{0.42\textwidth}
\vspace{0pt}
\centering
\includegraphics[width=\linewidth]{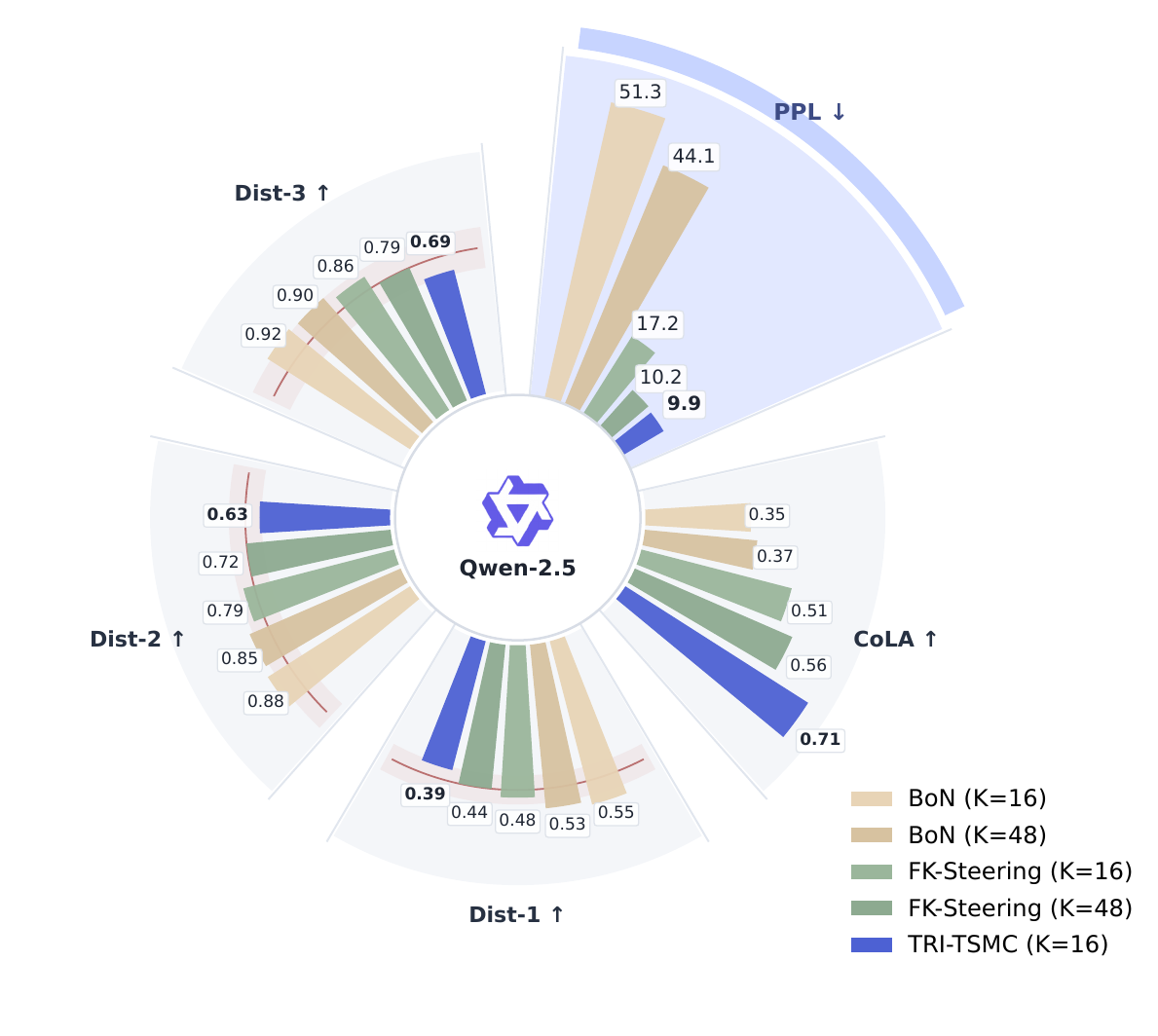}
\end{minipage}
\caption{
Alignment results on MDLM under Qwen2.5-1.5B-based evaluation. 
\textbf{LHS:} Quantitative results for Base, BoN, FK-Steering, and TRI-TSMC; PPL is the primary alignment objective, while CoLA and Dist-$n$ are secondary quality indicators. 
\textbf{RHS:} Polar bar visualization of the same results, with the PPL sector visually emphasized. The shaded bands in the secondary-metric sectors (Dist-1/2/3) indicate a ±15\% tolerance band centered at FK-Steering ($K=48$).
}
\label{fig:qwen_overall_result}
\end{figure}

\paragraph{Results Analysis.}

\Cref{fig:qwen_overall_result} reports Qwen2.5-1.5B-based PPL reward alignment results on MDLM. TRI-TSMC again achieves the lowest perplexity, reducing PPL from $10.231$ under FK-Steering ($K=48$) to $9.895$ with $K=16$ particles. Although the PPL gain over FK-Steering is small in this setting, TRI-TSMC gives a clear improvement in CoLA, increasing it from $0.564$ to $0.712$. The diversity metrics are lower than FK-Steering, with Dist-1/2/3 decreasing from $0.442/0.721/0.785$ to $0.389/0.627/0.691$. This indicates that TRI-TSMC concentrates more strongly on the primary PPL objective and improves grammatical acceptability, but at the cost of reduced output diversity. Across both GPT-2-based and Qwen2.5-based evaluations, TRI-TSMC is most effective at optimizing the primary alignment objective under a smaller particle budget, while the main remaining trade-off is diversity preservation.

An interesting observation is that replacing the GPT-2 perplexity reward with the Qwen2.5 perplexity reward consistently lowers Dist-n. We interpret this as stronger reward-induced mode concentration rather than an evaluation artifact. The Qwen2.5-based mixed objective assigns higher preference to a narrower set of fluent and syntactically acceptable continuations, which reduces within-prompt lexical diversity across candidate samples. Iteration-level diagnostics (refer to \Cref{app:text_samples}) support this interpretation: distinctness drops substantially from initialization to later TR iterations.

\subsection{Ablation Study: FK-Steering with Mixed Rewards}
\label{app:fk_mixed}

The main text reports FK-Steering with the single PPL reward, following the original text-generation setup of FK-Steering \citep{singhal2025general}. In this appendix, we additionally evaluate a mixed-reward variant of FK-Steering using the same reward form as TRI-TSMC:
\begin{align*}
    r_{\text{mix}} = r_{\text{PPL}} + \beta r_{\text{CoLA}}.
\end{align*}
This ablation study tests whether the gains of TRI-TSMC can be explained simply by using a mixed reward inside FK-Steering. The mixed-reward FK variant uses the same particle resampling procedure as FK-Steering; only the reward used for particle weighting and final selection is changed from $r_{\text{PPL}}$ to $r_{\text{mix}}$.

\begin{table*}[h]
\centering
\small
\setlength{\tabcolsep}{5pt}
\caption{Ablation study of FK-Steering mixed-reward on MDLM. All results are averaged over $3$ seeds. PPL is the primary alignment metric; CoLA and Dist-$n$ are secondary quality indicators.}
\label{tab:fk_mixed_ablation}
\begin{tabular}{llcccccc}
\toprule
Reward model & Method & $K$ & PPL $\downarrow$ & CoLA $\uparrow$ & Dist-1 $\uparrow$ & Dist-2 $\uparrow$ & Dist-3 $\uparrow$ \\
\midrule
GPT-2 & FK-Steering (single) & 48 & 36.702 & 0.393 & 0.520 & 0.831 & 0.888 \\
GPT-2 & FK-Steering (mixed, $\beta=0.40$) & 48 & 31.134 & 0.710 & 0.525 & 0.820 & 0.876 \\
\midrule
Qwen2.5-1.5B & FK-Steering (single) & 48 & 10.231 & 0.564 & 0.442 & 0.721 & 0.785 \\
Qwen2.5-1.5B & FK-Steering (mixed, $\beta=0.30$) & 48 & 11.654 & 0.739 & 0.463 & 0.736 & 0.794 \\
\bottomrule
\end{tabular}
\end{table*}

Table~\ref{tab:fk_mixed_ablation} shows that adding CoLA to FK-Steering substantially strengthens the FK baseline, especially under GPT-2-based evaluation. On GPT-2, FK-Steering with mixed reward improves both PPL and CoLA over the single-reward FK baseline. On Qwen2.5-1.5B, the mixed reward moves FK-Steering toward higher CoLA and diversity, but at the cost of worse PPL. Overall, the mixed-reward FK ablation mainly exposes a PPL--acceptability/diversity trade-off.

\subsection{Qualitative Analysis of Generated Text}
\label{app:text_samples}

We provide representative generated samples to better understand the metric trends in \Cref{fig:gpt2_overall_result} and \Cref{fig:qwen_overall_result}. The examples are not intended as a human evaluation; they are selected to illustrate recurring patterns we observed in the outputs, including local fluency, malformed-token artifacts, repetition, and the quality--diversity trade-off.

\begin{table*}[t]
\centering
\small
\setlength{\tabcolsep}{4pt}
\caption{Representative generated text samples under GPT-2-based evaluation. We compare FK-Steering ($K=48$) with TRI-TSMC ($K=16$) on the same prompts.}
\label{tab:gpt2-samples}
\begin{tabular}{p{0.15\linewidth}p{0.22\linewidth}p{0.55\linewidth}}
\toprule
Prompt & Method & Generated text \\
\midrule
\multirow{2}{=}{The book}
& FK-Steering ($K=48$)
& \textit{The book} will be released at 8 p.m. ET on Friday, and it will be published in Britain on Monday with Trump's blessing. Collins, however, does not aim to sell this book, but his book will \ldots \\
\cmidrule(lr){2-3}
& TRI-TSMC ($K=16$)
& \textit{The book} collection, designed by Collins, will travel to museums and libraries across England. Mr Cudell will now also survive in many libraries throughout the Clifton of London. He became part of an exhibition which opened \ldots \\
\midrule
\multirow{2}{=}{The city}
& FK-Steering ($K=48$)
& \textit{The city} is a shoreline on the eastern coast of Virginia, just west of the Mississippi River, about a mile to the north and half east of Leland. The name comes from the mountains \ldots \\
\cmidrule(lr){2-3}
& TRI-TSMC ($K=16$)
& \textit{The city} of Aleppo is under strict curfew and more than 4,000 people have been forced from the roads by the government's administration. Hundreds of people are also afraid to travel on the roads because of tight transport \ldots \\
\midrule
\multirow{2}{=}{The president of the country}
& FK-Steering ($K=48$)
& \textit{The president of the country} who announced the decision within an hour, has been the subject of criticism since the beginning of his administration. The judge reviewing the decision, Gerard Blanc, said that the case would \ldots \\
\cmidrule(lr){2-3}
& TRI-TSMC ($K=16$)
& \textit{The president of the country}, its current leader, Mahmoud Abbas, is a frequent critic of Israel, as well as has been in favor of policies towards Iran. Despite the paradoxical appearance of ``tolerance'', the Palestinians insist they are \ldots \\
\bottomrule
\end{tabular}
\end{table*}

\begin{table*}[t]
\centering
\small
\setlength{\tabcolsep}{4pt}
\caption{Representative generated text samples under Qwen2.5-based evaluation. We compare FK-Steering ($K=48$) with TRI-TSMC ($K=16$) on the same prompts.}
\label{tab:qwen-samples}
\begin{tabular}{p{0.15\linewidth}p{0.22\linewidth}p{0.55\linewidth}}
\toprule
Prompt & Method & Generated text \\
\midrule
\multirow{2}{=}{The book}
& FK-Steering ($K=48$)
& \textit{The book} allowing off s Acut Listnt exp league *accay offashion eker 2006.ile Sen off pThey'S 19 p Frasernt exp tearans 2006 S 2008.ileThey \ldots \\
\cmidrule(lr){2-3}
& TRI-TSMC ($K=16$)
& \textit{The book} has received more than 400 reviews from people across the world, has become the bestseller on Amazon, and it has been nominated for three Grammy Awards. It has read close to 650,000 times \ldots \\
\midrule
\multirow{2}{=}{The city}
& FK-Steering ($K=48$)
& \textit{The city} desrent purch beam e 2011-12, 2011-12,ay 2019-20 year S 2014-15ay 2015- \ldots \\
\cmidrule(lr){2-3}
& TRI-TSMC ($K=16$)
& \textit{The city} of Oakland is located on the border of the city of San Francisco and the city of San Francisco Bay. It is, as of 2012, the second most populous city in America. It is also part of the \ldots \\
\midrule
\multirow{2}{=}{The president of the country}
& FK-Steering ($K=48$)
& \textit{The president of the country} createdpect BaseType rout p Petut plements resist, pchoolut plements resist,ayout pchoolut p Carorecom Petut p numbers attemptedets document e p losingut pchool \ldots \\
\cmidrule(lr){2-3}
& TRI-TSMC ($K=16$)
& \textit{The president of the country} intends to do everything he can to avoid unprecedented condemnations from his party's allies, critics and the outside world. I will try to deal with all sides of the country. Of course, we will push \ldots \\
\bottomrule
\end{tabular}
\end{table*}

\paragraph{Case Studies Under GPT-2 Evaluation.}

\Cref{tab:gpt2-samples} compares FK-Steering and TRI-TSMC under GPT-2-based evaluation. In this setting, FK-Steering with $K=48$ usually produces readable continuations, but the outputs often stay close to generic news-style templates or only loosely develop the prompt. For example, the continuation for ``The book'' is locally plausible, but quickly becomes repetitive around the word ``book''. The continuation for ``The city'' gives a geographic description, but the details are generic and weakly grounded in the prompt.

TRI-TSMC tends to produce more structured continuations in these examples. For ``The book'', the output develops a concrete setting around a collection, museums, and libraries. For ``The city'', the output moves toward an event-centered description involving curfew, displacement, and transportation constraints. These samples are not necessarily factually reliable, but they are more sentence-like and locally organized. This is consistent with the main GPT-2 results: TRI-TSMC improves the primary PPL objective and substantially improves CoLA, while the lower Dist-$n$ scores indicate a more concentrated set of high-reward continuations.

\paragraph{Case Studies Under Qwen2.5 Evaluation.}
The difference is more visible under Qwen2.5-based evaluation, as shown in \Cref{tab:qwen-samples}. FK-Steering with the single Qwen2.5 reward can select outputs with low evaluator perplexity, but some selected continuations contain malformed tokens, broken word pieces, or repeated numeric fragments. For instance, the FK-Steering output for ``The book'' contains fragments such as ``Acut Listnt'', and the output for ``The city'' collapses into repeated year-like strings. These artifacts suggest that optimizing the PPL reward alone can sometimes select sequences that exploit the evaluator without remaining well-formed at the surface level.

TRI-TSMC reduces this failure mode in the shown examples. The continuations for ``The book'', ``The city'', and ``The president of the country'' are more grammatical and easier to read, even though they are still generic and not guaranteed to be factually correct. This matches the quantitative results: TRI-TSMC improves CoLA over FK-Steering under Qwen2.5-based evaluation, but it also lowers Dist-$n$. In other words, the method appears to trade surface diversity for more stable, sentence-like continuations.

\paragraph{Overall Pattern.}

The qualitative examples support the same picture as the main metrics. FK-Steering with a large particle budget is effective at lowering PPL, but its selected samples can still contain generic templates, local repetition, or malformed-token artifacts, especially under the Qwen2.5 reward. TRI-TSMC shifts the selection toward continuations that are more fluent and grammatically acceptable, but it also concentrates probability mass on fewer modes. This explains why TRI-TSMC improves the primary alignment objective and CoLA, while Dist-1/2/3 decrease. Diversity preservation remains the main limitation of the current method.

\section{Additional Text-to-Image Experiments}

\subsection{Text-to-Image Implementation Details}
\label{app:image_implement}

For text-to-image generation, TRI-TSMC uses the Stable Diffusion latent
diffusion sampler as the base proposal. We use stochastic DDIM sampling with $100$ denoising steps, $\eta=1$, and classifier-free guidance scale $7.5$. The steering reward is ImageReward. At each resampling stage, we decode the denoised prediction $\hat{x}_0$, evaluate ImageReward, and convert the intermediate reward into a local Feynman--Kac weight. We use $K=4$ particles, resampling frequency $20$, and one trust-region update, so each prompt has an initialization stage and one updated TRI-TSMC stage. This gives $8$ total sampled trajectories, matching the $K=8$ BoN and FK-Steering sampling budget.

The text-to-image twisting function operates in latent space. At a DDIM step, the base transition can be written as
\begin{align*}
    z_{t-1} = \mu_{\mathrm{base}}(z_t,t) + \sigma_t \epsilon, \qquad \epsilon\sim\mathcal{N}(0,I).
\end{align*}
TRI-TSMC learns a latent correction $\delta_\theta(z_t,t)$ and samples from
\begin{align*}
    z_{t-1} = \mu_{\mathrm{base}}(z_t,t) + \sigma_t\big(\epsilon+\delta_\theta(z_t,t)\big).
\end{align*}
Thus, the twisting network shifts the Gaussian noise component of the DDIM
transition. The architecture and main hyperparameters are summarized in
\Cref{tab:t2i-twist-arch}.

\begin{table}[t]
\centering
\caption{TRI-TSMC architecture and main hyperparameters for text-to-image diffusion experiments.}
\label{tab:t2i-twist-arch}
\small
\setlength{\tabcolsep}{6pt}
\begin{tabular}{p{0.28\linewidth}p{0.64\linewidth}}
\toprule
Component & Setting \\
\midrule
Twisting network & Latent-space CNN conditioned on normalized diffusion time. \\
Architecture size & Latent channels $4$, hidden channels $64$, and $64$ time bins. \\
Output & Latent noise correction $\delta_\theta(z_t,t)$; final convolution zero-initialized. \\
Sampler & Stochastic DDIM with $100$ steps, $\eta=1$, CFG scale $7.5$. \\
Sampling budget & $K=4$ particles, one trust-region update, $8$ total sampled trajectories. \\
Resampling / reward & Resampling every $20$ steps; ImageReward evaluated on decoded $\hat{x}_0$. \\
Potential & \texttt{diff} potential with $\lambda=20$. \\
SD v1.5 setting & $\epsilon=0.05$, twist learning rate $3\times10^{-4}$, $200$ twist updates. \\
SDXL setting & $\epsilon=0.01$, twist learning rate $2\times10^{-3}$, $200$ twist updates, EMA twist sampling with decay $0.995$. \\
Optimization & Adam with gradient clipping norm $1.0$. \\
\bottomrule
\end{tabular}
\end{table}

\subsection{Additional SDXL Experiments}
\label{app:SDXL_results}

\Cref{tab:t2i_sdxl} reports additional text-to-image alignment results using SDXL as the base generator. IR remains the primary alignment metric. TRI-TSMC achieves the highest IR score, improving over FK-Steering ($K=8$) from $1.238$ to $1.275$ and over Best-of-$N$ ($K=8$) from $1.195$ to $1.275$. This shows that the same trust-region twisting mechanism also improves the primary reward on a stronger base generator.

On secondary metrics, TRI-TSMC obtains an HPS score of $0.300$, which is close to the best HPS score $0.301$ from Best-of-$N$ ($K=8$) and higher than FK-Steering ($K=8$). For GenEval, TRI-TSMC obtains $0.660$, which is comparable to FK-Steering ($K=4$) but lower than the larger-budget Best-of-$N$ and FK-Steering baselines. Thus, the SDXL results show the same pattern as the SD v1.5 results: TRI-TSMC improves the IR alignment objective, while secondary metrics are also comparable with baselines

\begin{table}[h]
    \centering
    \caption{
    Main results on text-to-image alignment with SDXL.
    IR is the primary alignment metric.
    GenEval evaluates compositional prompt fidelity, and HPS evaluates human preference.
    }
    \label{tab:t2i_sdxl}
    \begin{NiceTabular}{l l c c c c}
        \CodeBefore
        \rowcolor{gray!10}{8}
        \columncolor[opacity=0.2]{blue}{4}
        \Body
        \toprule
        Base & Method & $K$ & IR(*) $\uparrow$ & GenEval $\uparrow$ & HPS $\uparrow$ \\
        \midrule
        SDXL & Base & 1
        & $0.752$ & $0.551$ & $0.282$ \\
        
        SDXL & DPO & 1
        & $0.913$ & $0.586$ & $0.299$ \\
        
        SDXL & Best-of-$N$ & 4
        & $1.094$ & $0.732$ & $0.296$ \\
        
        SDXL & Best-of-$N$ & 8
        & $1.195$ & $\mathbf{0.811}$ & $\mathbf{0.301}$ \\
        
        SDXL & FK-Steering & 4
        & $1.076$ & $0.665$ & $0.292$ \\
        
        SDXL & FK-Steering & 8
        & $1.238$ & $0.723$ & $0.298$ \\
        
        SDXL & \textbf{TRI-TSMC} & 4
        & $\mathbf{1.275}$ & $0.660$ & $0.300$ \\
        \bottomrule
    \end{NiceTabular}
\end{table}

\subsection{Qualitative Analysis of Generated Image}
\label{app:sd_1.5_figures}

The main SD v1.5 results are reported in \Cref{tab:text2image_main_max}. The qualitative comparisons in \Cref{fig:sd1.5_baseline,fig:fancy_treehouse} provide additional examples of the same trend. Compared with the base sampler and post-training DPO reference, TRI-TSMC tends to produce images that more directly emphasize the prompt subject while remaining visually plausible. Compared with BoN and FK-Steering, TRI-TSMC often produces samples that are more aligned with the IR-preferred interpretation of the prompt, although the examples also show that improving IR does not necessarily dominate every secondary criterion such as compositional fidelity.

In \Cref{fig:fancy_treehouse}, the initialization particles contain plausible landscape images but only weakly express the ``treehouse mansion on mountain'' prompt. After one trust-region twist update, the particle set shifts toward images with clearer treehouse-like structures and stronger prompt correspondence. This supports the quantitative observation in the main text: the learned twist steers the particle distribution toward higher-IR generations while maintaining reasonable visual quality and prompt alignment.

\begin{figure}[h]
    \centering
    \includegraphics[width=0.9\linewidth]{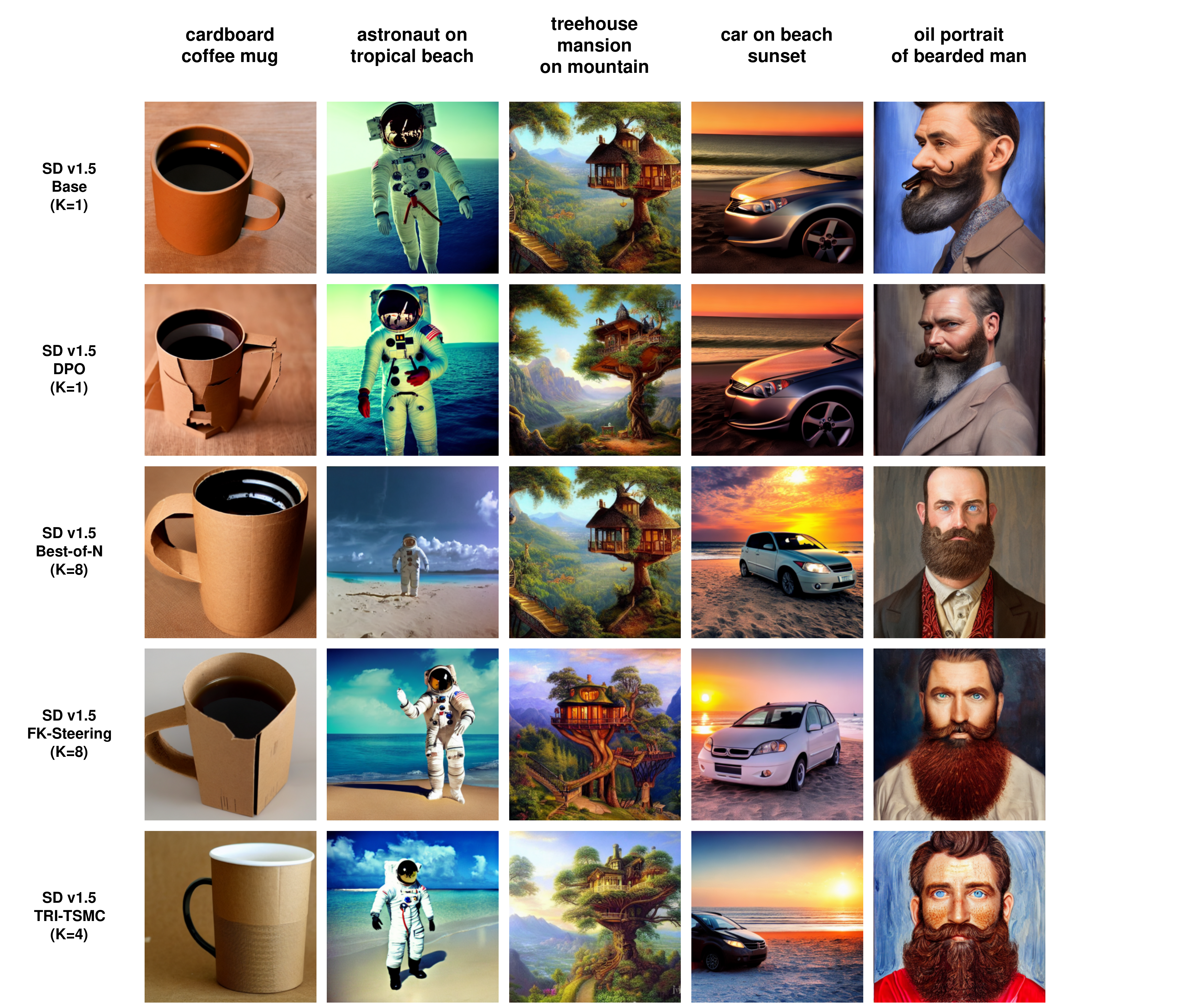}
    \caption{Qualitative comparison of text-to-image alignment methods using SD v1.5 as the base generator. The prompts are selected from the IR benchmark. Each column corresponds to a representative prompt, and each row shows samples from a different baselines including the base sampler, DPO, Best-of-$N$, FK-Steering, and TRI-TSMC under the indicated particle budget. TRI-TSMC produces visually competitive samples while improving prompt alignment under the same inference-time sampling budget.}
    \label{fig:sd1.5_baseline}
\end{figure}

\begin{figure}[h]
    \centering
    \includegraphics[width=\linewidth]{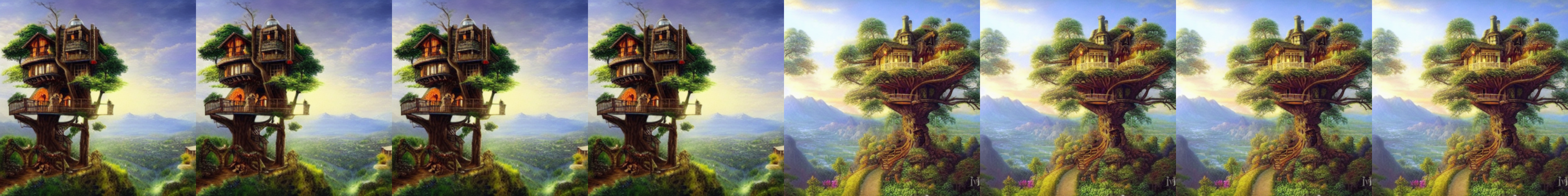}
    \caption{TRI-TSMC qualitative samples for the prompt ``fancy treehouse mansion on mountain.'' The left four images show the initialization particles before the trust-region twist update, and the right four images show the samples after one TRI-TSMC iteration. The comparison illustrates how the learned twisting update steers the particle set toward more prompt-aligned and visually plausible generations.}
    \label{fig:fancy_treehouse}
\end{figure}

\clearpage
\newpage

\bibliographystyle{ims}
\bibliography{reference}

\end{document}